\newtheorem{lemma}{Lemma}
\newcommand{\trainedEmb}{\texttt{T4096}\xspace}
\newcommand{\randomEmb}{\texttt{R64}\xspace}
\newcommand{\randomEmbfull}{\texttt{R4096}\xspace}
\newcommand*{\eg}{\textit{e.g.}\@\xspace}
\begin{document}

\twocolumn[
\icmltitle{Reliable Fidelity and Diversity Metrics for Generative Models}
\icmlsetsymbol{equal}{*}

\begin{icmlauthorlist}
\icmlauthor{Muhammad Ferjad Naeem}{equal,naver,tum}
\icmlauthor{Seong Joon Oh}{equal,naver}
\icmlauthor{Youngjung Uh}{naver}
\icmlauthor{Yunjey Choi}{naver}
\icmlauthor{Jaejun Yoo}{naver,epfl}
\end{icmlauthorlist}

\icmlaffiliation{naver}{Clova AI Research, Naver Corp.}
\icmlaffiliation{tum}{Technische Universit\"at M\"unchen, Germany}
\icmlaffiliation{epfl}{\'Ecole polytechnique f\'ed\'erale de Lausanne (EPFL), Switzerland}

\icmlcorrespondingauthor{Jaejun Yoo}{jaejun.yoo88@gmail.com}

\icmlkeywords{Generative models, Evaluation, Generative adversarial networks}

\icmltitlerunning{Reliable Fidelity and Diversity Metrics for Generative Models}

\vskip 0.3in
]

\begin{NoHyper}
\printAffiliationsAndNotice{\icmlEqualContribution}
\end{NoHyper}

\begin{abstract}
Devising indicative evaluation metrics for the image generation task remains an open problem. The most widely used metric for measuring the similarity between real and generated images has been the Fr\'echet Inception Distance (FID) score. Because it does not differentiate the \textit{fidelity} and \textit{diversity} aspects of the generated images, recent papers have introduced variants of precision and recall metrics to diagnose those properties separately. In this paper, we show that even the latest version of the precision and recall metrics are not reliable yet. For example, they fail to detect the match between two identical distributions, they are not robust against outliers, and the evaluation hyperparameters are selected arbitrarily. We propose \textbf{density and coverage} metrics that solve the above issues. We analytically and experimentally show that density and coverage provide more interpretable and reliable signals for practitioners than the existing metrics. Code: { \href{https://github.com/clovaai/generative-evaluation-prdc}{github.com/clovaai/generative-evaluation-prdc}
}.

\end{abstract}

\section{Introduction}

Assessing a generative model is difficult. Unlike the evaluation of discriminative models $P(T|X)$ that is often easily done by measuring the prediction performances on a few labelled samples $(X_i,T_i)$, generative models $P(X)$ are assessed by measuring the discrepancy between the real $\{X_i\}$ and generated (fake) $\{Y_j\}$ sets of high-dimensional data points. Adding to the complexity, there are more than one way of measuring distances between two distributions each with its own pros and cons. In fact, even human judgement based measures like Mean Opinion Scores (MOS) are not ideal, as practitioners have diverse opinions on what the ``ideal'' generative model is \cite{borji2019pros}. 

Nonetheless, there must be some measurement of the quality of generative models for the progress of science. Several quantitative metrics have been proposed, albeit with their own set of trade-offs. For example, Fr\'echet Inception Distance (FID) score~\cite{fid2017}, the most popular metric in image generation tasks, has empirically exhibited good agreements with human perceptual scores. However, FID summarises the comparison of two distributions into a single number, failing to separate two important aspects of the quality of generative models: \textbf{fidelity and diversity}~\cite{pr2018}. Fidelity refers to the degree to which the generated samples resemble the real ones. Diversity, on the other hand, measures whether the generated samples cover the full variability of the real samples. 

Recent papers~\cite{pr2018, pr2019, ipr2019} have introduced precision and recall metrics as measures of fidelity and diversity, respectively. Though precision and recall metrics have introduced the important perspectives in generative model evaluation, we show that they are not ready yet for practical use. We argue that necessary conditions for useful evaluation metrics are: (1) ability to detect identical real and fake distributions, (2) robustness to outlier samples, (3) responsiveness to mode dropping, and (4) the ease of hyperparameter selection in the evaluation algorithms. Unfortunately, even the most recent version of the precision and recall metrics~\cite{ipr2019} fail to meet the requirements.

To address the practical concerns, we propose the \textbf{density and coverage} metrics. By introducing a simple yet carefully designed manifold estimation procedure, we not only make the fidelity-diversity metrics empirically reliable but also theoretically analysable. We test our metric on generative adversarial networks, one of the most successful generative models in recent years. 

We then study the embedding algorithms for evaluating image generation algorithms. Embedding is an inevitable ingredient due to the high-dimensionality of images and the lack of semantics in the RGB space. Despite the importance, the embedding pipeline has been relatively less studied in the existing literature; evaluations of generated images mostly rely on the features from an ImageNet pre-trained model~\cite{inceptionscore2016, fid2017, pr2018, pr2019, ipr2019, deng2009imagenet}. This sometimes limits the fair evaluation and provides a false sense of improvement, since the pre-trained models inevitably include the dataset bias~\cite{torralba2011unbiased,geirhos2018}. We show that such pre-trained embeddings often exhibit unexpected behaviours as the target distribution moves away from the natural image domain.  

To exclude the dataset bias, we consider using randomly initialised CNN feature extractors~\cite{ulyanov2018deep}. We compare the evaluation metrics on MNIST and sound generation tasks using the random embeddings. We observe that random embeddings provide more macroscopic views on the distributional discrepancies. In particular, random embeddings provide more sensible evaluation results when the target data distribution is significantly different from ImageNet statistics (\eg MNIST and spectrograms).

\begin{figure*}[!ht]
\centering
\setlength{\tabcolsep}{2em}
\begin{tabular}{c|c}
   \subfloat[Precision versus density.\label{fig:overview_p_vs_d}]{\includegraphics[width=0.36\linewidth]{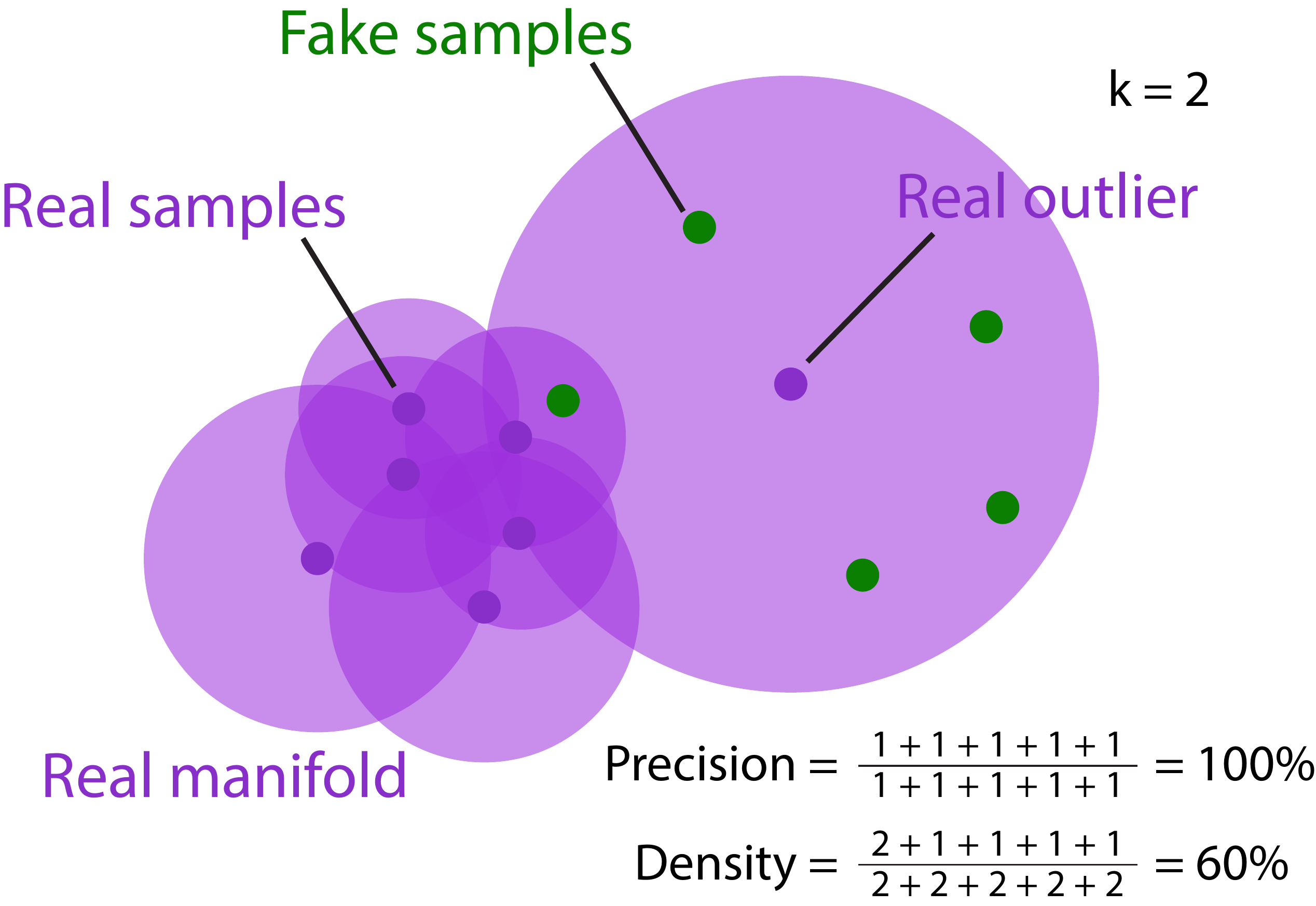}}  & \subfloat[Recall versus coverage.\label{fig:overview_r_vs_c}]{\includegraphics[width=0.45\linewidth]{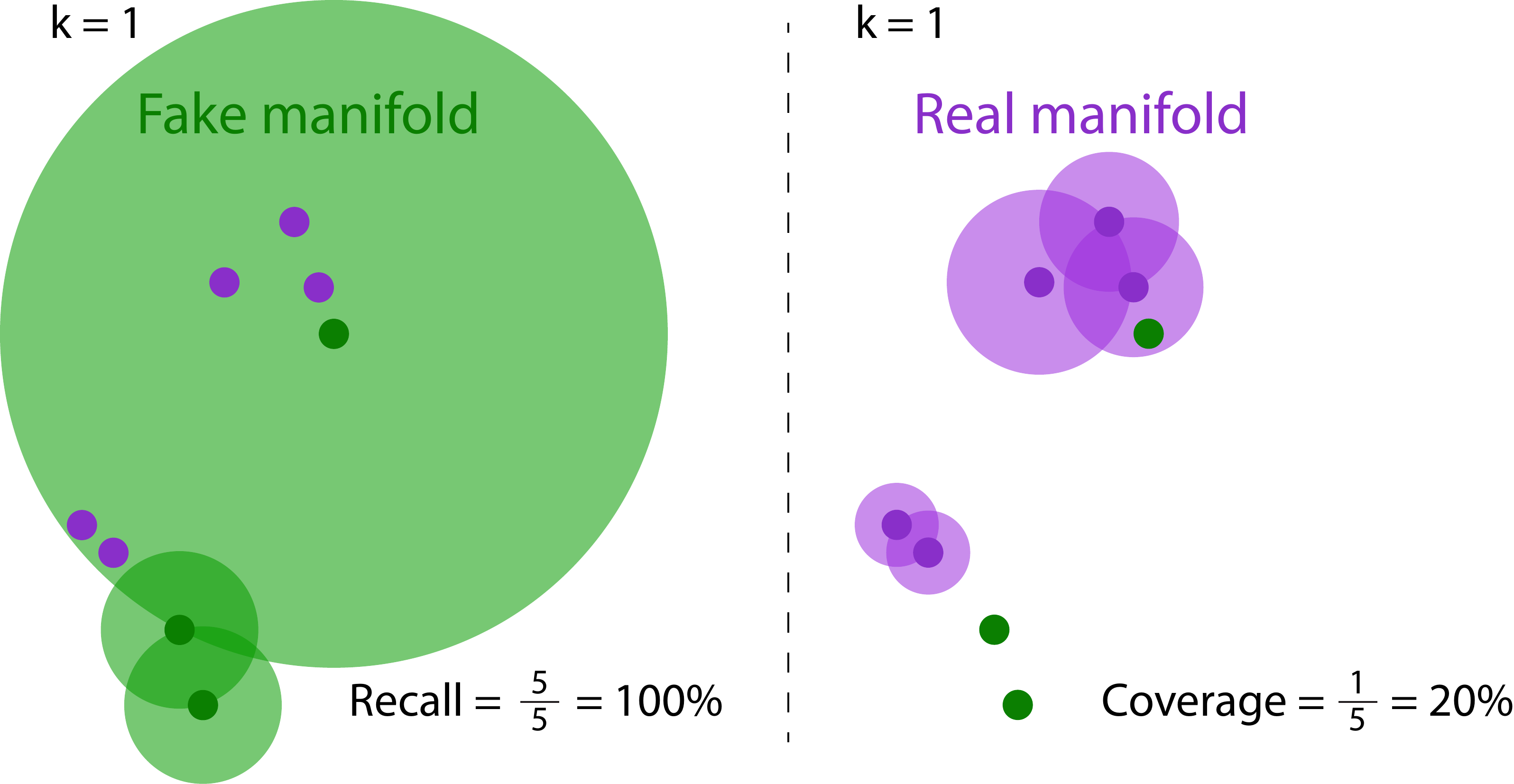}}
\end{tabular}
\caption{\small \textbf{Overview of metrics.} Two example scenarios for illustrating the advantage of density over precision and coverage over recall. Note that for recall versus coverage figure, the real and fake samples are identical across left and right.}
\label{fig:overview}
\end{figure*}

\section{Backgrounds}

Given a real distribution $P(X)$ and a generative model $Q(Y)$, we assume that we can sample $\{X_i\}$ and $\{Y_j\}$, respectively. We need an algorithm to assess how likely the sets of samples are arising from the same distribution. When the involved distribution families are tractable and the full density functions can be easily estimated, statistical testing methods or distributional distance measures (\eg Kullback-Leibler Divergence or Expected Likelihood) are viable. However, when the data $P(X)$ are complex and high-dimensional (\eg natural images), it becomes difficult to apply such measures naively~\cite{theis2016note}. Because of the difficulty, the evaluation of samples from generative models is still an actively researched topic. In this section, we provide an overview of existing approaches. We describe the most widely-used evaluation pipeline for image generative models (\S\ref{subsec:evaluation_pipeline}), and then introduce the prior works on fidelity and diversity measures (\S\ref{subsec:prior_fidelity_diversity}). For a more extensive survey, see \cite{borji2019pros}.

\subsection{Evaluation pipeline}
\label{subsec:evaluation_pipeline}

It is difficult to conduct statistical analyses over complex and high-dimensional data $X$ in their raw form (\eg images). Thus, evaluation metrics for image generative models largely follow the following stages:
(1) embed real and fake data ($\{X_i\}$ and $\{Y_j\}$) into a Euclidean space $\mathbb{R}^D$ through a non-linear mapping $f$ like CNN feature extractors, (2) construct real and fake distributions over $\mathbb{R}^D$ with the embedded samples $\{f(X_i)\}$ and $\{f(Y_j)\}$, and (3) quantify the discrepancy between the two distributions. We describe each stage in the following paragraphs.

\textbf{Embeddings.} It is often difficult to define a sensible metric over the input space. For example, $\ell_2$ distance over the image pixels $\|X_i-Y_j\|_2$ is misleading because two perceptually identical images may have great $\ell_2$ distances (one-pixel translation)~\cite{theis2016note}. To overcome this difficulty, researchers have introduced ImageNet pre-trained CNN feature extractors as the embedding function $f$ in many generative model evaluation metrics~\cite{inceptionscore2016,fid2017,pr2018,ipr2019} based on the reasoning that the $\ell_2$ distance in the feature space $\|f(X_i)-f(Y_j)\|_2$ provide sensible proxies for the human perceptual metric~\cite{zhang2018unreasonable}. Since we always use embedded samples for computing metrics, we write $X_i$ and $Y_j$ for $f(X_i)$ and $f(Y_j)$, respectively.

In this work, we also adopt ImageNet pre-trained CNNs for the embedding, but we also criticise their use when the data distribution is distinct from the ImageNet distribution (\S\ref{subsec:random_embeddings}). We suggest randomly-initialised CNN feature extractors as an alternative in such cases~\cite{ulyanov2018deep,zhang2018unreasonable}. We show that for MNIST digits or sound spectrograms with large domain gaps from ImageNet, random embeddings provide more sensible evaluation measures. 

\textbf{Building and comparing distributions.}
Given embedded samples $\{X_i\}$ and $\{Y_j\}$, many metrics conduct some form of (non-)parametric statistical estimation. \textit{Parzen window estimates}~\cite{bengio13parzen} approximate the likelihoods of the fake samples $\{Y_j\}$ by estimating the density $P(X)$ with Gaussian kernels around the real samples $\{X_i\}$. On the parametric side, \textit{Inception scores (IS)}~\cite{inceptionscore2016} estimate the multinomial distribution $P(T|Y_j)$ over the 1000 ImageNet classes for each sample image $Y_j$ and compares it against the estimated marginalised distribution $P(T)$ with the KL divergence. \textit{Fr\'echet Inception Distance (FID)}~\cite{fid2017} estimates the mean $\mu$ and covariance $\Sigma$ for $X$ and $Y$ assuming that they are multivariate Gaussians. The distance between the two Gaussians is computed by the Fr\'echet distance~\cite{dowson1982frechet}, also known as the Wasserstein-2 distance~\cite{vaserstein1969markov}. FID has been reported to generally match with human judgements~\cite{fid2017}; it has been the most popular metric for image generative models in the last couple of years~\cite{borji2019pros}.

\subsection{Fidelity and diversity}
\label{subsec:prior_fidelity_diversity}
While single-value metrics like IS and FID have led interesting advances in the field by ranking generative models, they are not ideal for diagnostic purposes. One of the most important aspects of the generative models is the trade-off between \textit{fidelity} (how realistic each input is) and \textit{diversity} (how well fake samples capture the variations in real samples). We introduce variants of the two-value metrics (precision and recall) that capture the two characteristics separately.

\textbf{Precision and recall.} \cite{pr2018} have reported the pathological case where two generative models have similar FID scores, while their qualitative fidelity and diversity results are different. For a better diagnosis of generative models, \cite{pr2018} have thus proposed the \textit{precision and recall} metrics based on the estimated supports of the real and fake distributions. Precision is defined as the portion of $Q(Y)$ that can be generated by $P(X)$; recall is symmetrically defined as the portion of $P(X)$ that can be generated by $Q(Y)$. While conceptually useful, they have multiple practical drawbacks. It assumes that the embedding space is uniformly dense, relies on the initialisation-sensitive k-means algorithm for support estimation, and produces an infinite number of values as the metric.

\textbf{Improved precision and recall.}
\cite{ipr2019} have proposed the \textit{improved precision and recall (P\&R)} that address the above drawbacks. The probability density functions are estimated via k-nearest neighbour distances, overcoming the uniform-density assumption and the reliance on the k-means. Our proposed metrics are based on P\&R; we explain the full details of P\&R here.

P\&R first constructs the ``manifold'' for $P(X)$ and $Q(Y)$ separately, the object is nearly identical to the probabilistic density function except that it does not sum to 1. Precision then measures the expected likelihood of fake samples against the real manifold and recall measures the expected likelihood of real samples against the fake manifold: %
\begin{align}
    \text{precision}&:=\frac{1}{M}\sum_{j=1}^{M}1_{Y_j\in\text{manifold}(X_1,\cdots,X_N)}
    \label{eq:precision}
    \\
    \text{recall}&:=\frac{1}{N}\sum_{i=1}^{N}1_{X_i\in\text{manifold}(Y_1,\cdots,Y_M)}
    \label{eq:recall}
\end{align}
where $N$ and $M$ are the number of real and fake samples. $1_{(\cdot)}$ is the indicator function.
Manifolds are defined
\begin{align}
    \text{manifold}(X_1,\cdots,X_N):=
    \bigcup_{i=1}^{N} B(X_i,\text{NND}_k(X_i))
    \label{eq:manifold}
\end{align}
where $B(x,r)$ is the sphere in $\mathbb{R}^D$ around $x$ with radius $r$. $\text{NND}_k(X_i)$ denotes the distance from $X_i$ to the $k^{\text{th}}$ nearest neighbour among $\{X_i\}$ excluding itself. Example computation of P\&R is shown in Figure~\ref{fig:overview}. 

There are similarities between the precision above and the Parzen window estimate~\cite{bengio13parzen}. If the manifolds are formed by \textit{superposition} instead of \textit{union} in Equation~\ref{eq:manifold} and the spheres $B(\cdot,\cdot)$ are replaced with Gaussians of fixed variances, then the manifold estimation coincides with the kernel density estimation. In this case, Equation~\ref{eq:precision} computes the expected likelihood of fake samples.

\section{Density and Coverage}
We propose novel performance measures \textbf{density and coverage (D\&C)} as practically usable measures that successfully remedy the problems with precision and recall.

\subsection{Problems with improved precision and recall}
\label{subsec:problem_precision_recall}

Practicality of the improved precision and recall (P\&R) is still compromised due to their vulnerability to outliers and computational inefficiency. Building the nearest neighbour manifolds (Equation~\ref{eq:manifold}) must be performed carefully because the spheres around each sample are not normalised according to the their radii or the relative density of samples in the neighbourhood. Consequently, the nearest neighbour manifolds generally overestimate the true manifold around the outliers, leading to undesired effects in practice. We explain the drawbacks at the conceptual level here; they will be quantified in \S\ref{sec:experiments}.

\textbf{Precision.}
We first show a pathological case for precision in Figure~\ref{fig:overview_p_vs_d}. Because of the real outlier sample, the manifold is overestimated. Generating many fake samples around the real outlier is enough to increase the precision measure. 

\textbf{Recall.}
The nearest neighbour manifold is built upon the fake samples in Equation~\ref{eq:recall}. Since models tend to generate many unrealistic yet diverse samples, the fake manifold is often an overestimation of the actual fake distribution. The pathological example is shown in Figure~\ref{fig:overview_r_vs_c}. While the fake samples are far from the modes in real samples, the recall measure gains points for real samples contained in the overestimated fake manifold. Another problem with relying on fake manifolds for the recall computation is that the manifold must be computed per model. For example, to generate the recall-vs-iteration curve for training diagnosis, the k-nearest neighbours for all fake samples must be computed ($O(kM\log M)$) for every data point.

\subsection{Density and coverage}

We remedy the issues with P\&R above and propose new metrics: density and coverage (D\&C).

\textbf{Density.}
\textit{Density} improves upon the precision metric by fixing the overestimation of the manifold around real outliers. Precision counts the binary decision of whether the fake data $Y_j$ contained in \textit{any} neighbourhood sphere $\{B(X_i,\text{NND}_k(X_i))\}_i$ of real samples (Equation~\ref{eq:precision}). Density, instead, counts \textit{how many} real-sample neighbourhood spheres contain $Y_j$ (Equation~\ref{eq:density}).
The manifold is now formed by the superposition of the neighbourhood spheres $\{B(X_i,\text{NND}_k(X_i))\}_i$, and a form of expected likelihood of fake samples is measured. In this sense, the density measure is at the midway between the precision metric and the Parzen window estimate. Density is defined as
\begin{align}
\text{density}:=&\frac{1}{kM}\sum_{j=1}^{M}\sum_{i=1}^{N}1_{Y_j\in B(X_i,\text{NND}_k(X_i))}
\label{eq:density}
\end{align}
where $k$ is for the k-nearest neighbourhoods. Through this modification, density rewards samples in regions where real samples are densely packed, relaxing the vulnerability to outliers. For example, the problem of overestimating precision (100\%) in Figure~\ref{fig:overview_p_vs_d} is resolved using the density measure (60\%). Note that unlike precision, density is not upper bounded by 1; it may be greater than 1 depending on the density of reals around the fakes.

\textbf{Coverage.}
Diversity, intuitively, shall be measured by the ratio of real samples that are covered by the fake samples. \textit{Coverage} improves upon the recall metric to better quantify this by building the nearest neighbour manifolds around the real samples, instead of the fake samples, as they have less outliers. Moreover, the manifold can only be computed per dataset, instead of per model, reducing the heavy nearest neighbour computations in recall. Coverage is defined as
\begin{align}
\text{coverage}:=&\frac{1}{N}\sum_{i=1}^{N}1_{\exists\text{ $j$ s.t. } Y_j\in B(X_i,\text{NND}_k(X_i))}.
\label{eq:coverage}
\end{align}
It measures the fraction of real samples whose neighbourhoods contain at least one fake sample. Coverage is bounded between 0 and 1.

\subsection{Analytic behaviour of density \& coverage}
\label{subsec:theory_density_converage}

The simplest sanity check for an evaluation metric is whether the metric attains the best value when the intended criteria are met. For generative models, we examine if D\&C attain 100\% performances when the real and fake distribution are identical ($P\overset{d}{=}Q$). We show that, unlike P\&R, D\&C yield an analytic expression for the expected values $\mathbb{E}[\text{density}]$ and $\mathbb{E}[\text{coverage}]$ for identical real and fake, and the values approach 100\% as the sample sizes $(N,M)$ and number of neighbourhoods $k$ increase. This analysis further leads to a systematic algorithm for selecting the hyperparameters $(k,N,M)$.

\textbf{Lack of analytic results for P\&R.}
From Equation~\ref{eq:precision}, the expected precision for identical real and fake is
\begin{align}
\mathbb{E}[\text{precision}]
&=\mathbb{P}[X_0\in\text{manifold}(X_1,\cdots,X_N)] \\
&=\mathbb{P}\left[X_0\in\bigcup_{i=1}^{N} B(X_i,\text{NND}_k(X_i))\right] \\
&=\mathbb{P}\left[\cup_{i=1}^{N} A^k_{i}\right]
\label{eq:difficult}
\end{align} 
where $A^k_i$ is the event $\{\|X_0-X_i\| < \text{NND}_k(X_i)\}$. Since $(A^k_i)_i$ are not independent with complex dependence structures, a simple expression for Equation~\ref{eq:difficult} does not exist. Same observation holds for $\mathbb{E}[\text{recall}]$.

\begin{figure*}[!t]
\newcommand{\hyperparamtablesubfloatwidth}{.195\linewidth}
\small
\setlength{\tabcolsep}{0.1em}
    \centering
    \begin{tabular}{ccc|ccc}
        \multicolumn{2}{c}{\textbf{Precision}} && \multicolumn{3}{c}{\textbf{Density}} \\
        \cline{1-2} \cline{4-6}
        \vspace{-.8em} & \\
        {\small Gaussian} & {\small FFHQ} && {\small Gaussian} & {\small FFHQ} & {\small Analytic} \\
        \includegraphics[width=\hyperparamtablesubfloatwidth]{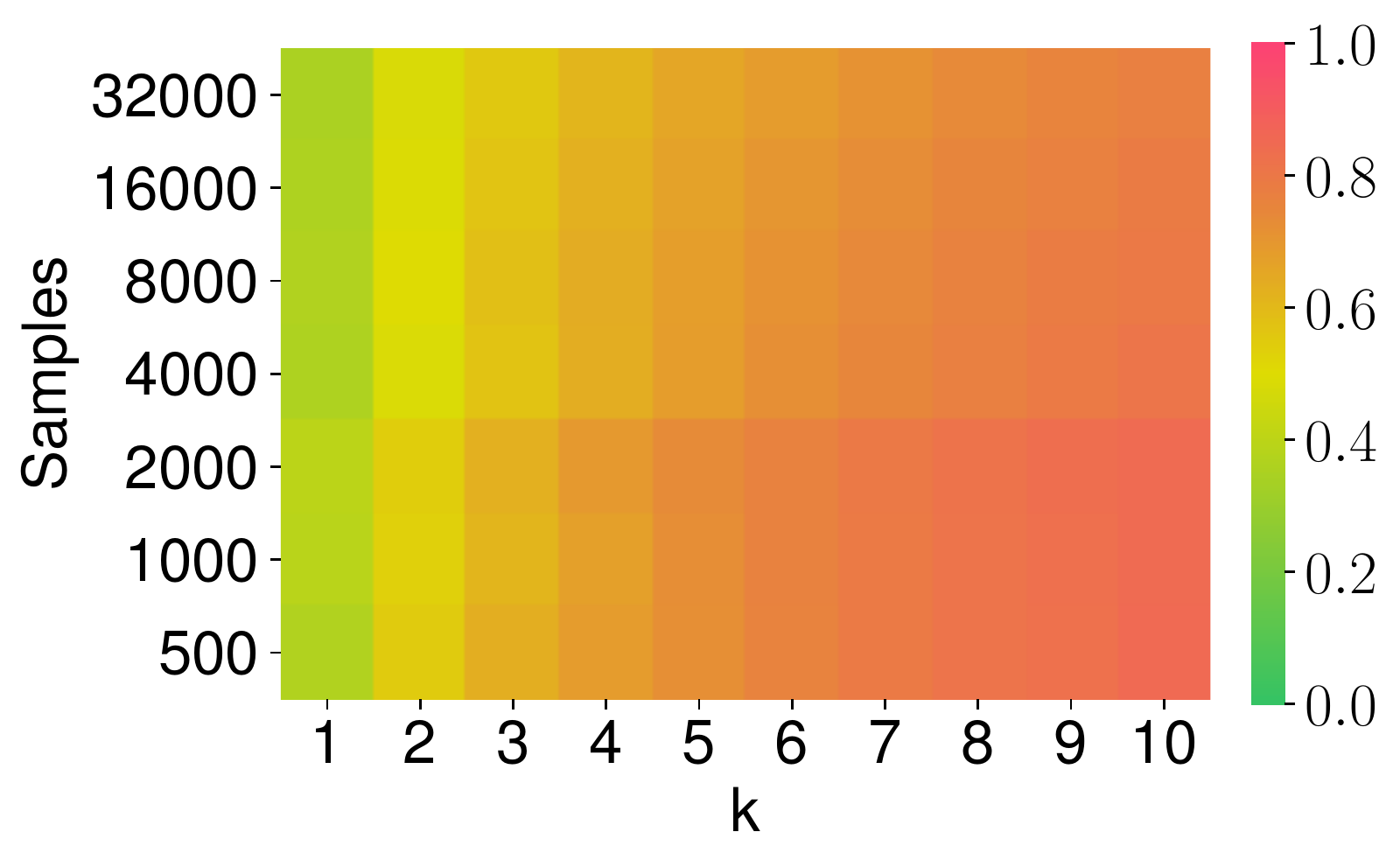} &
        \includegraphics[width=\hyperparamtablesubfloatwidth]{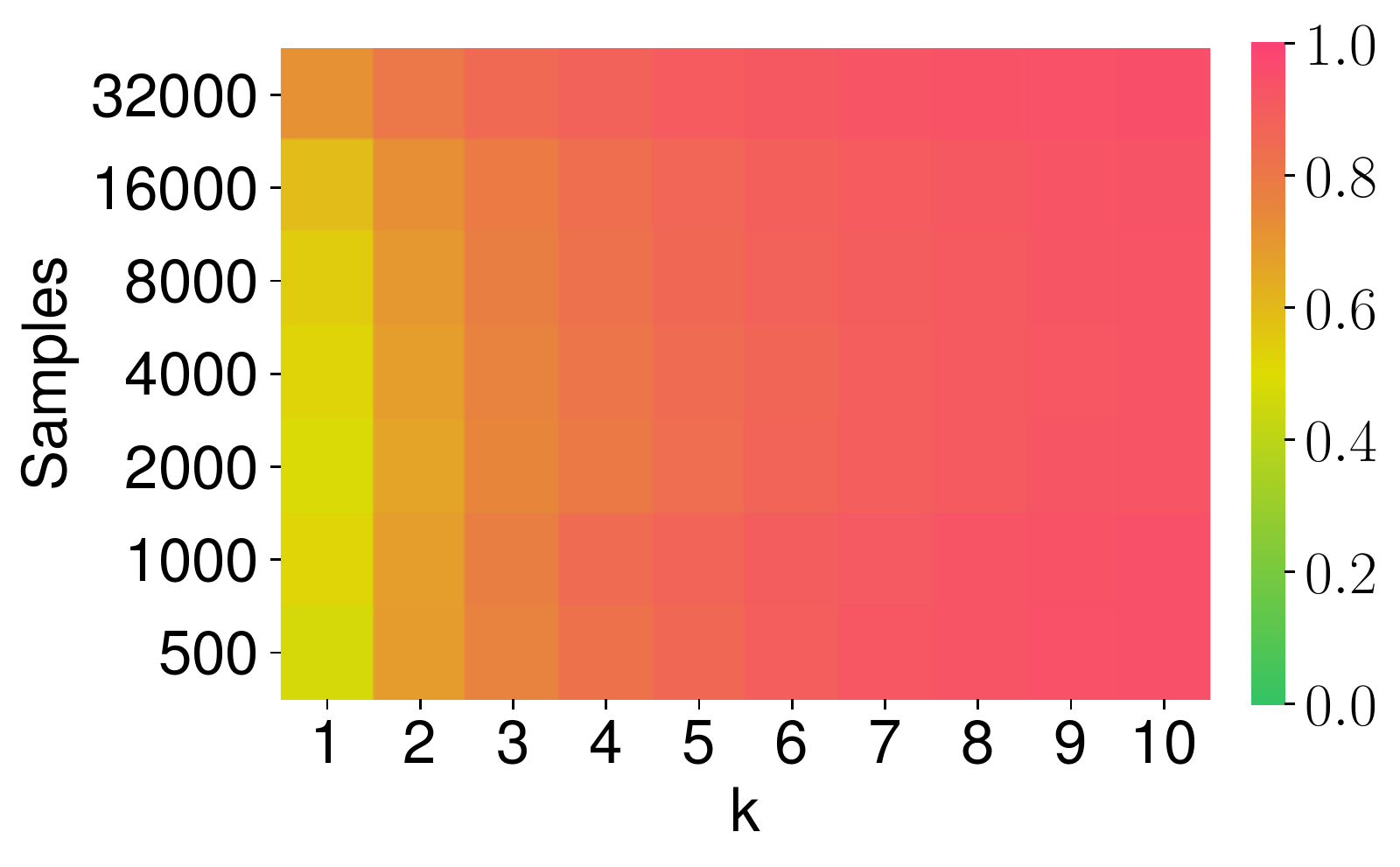} &&
        \includegraphics[width=\hyperparamtablesubfloatwidth]{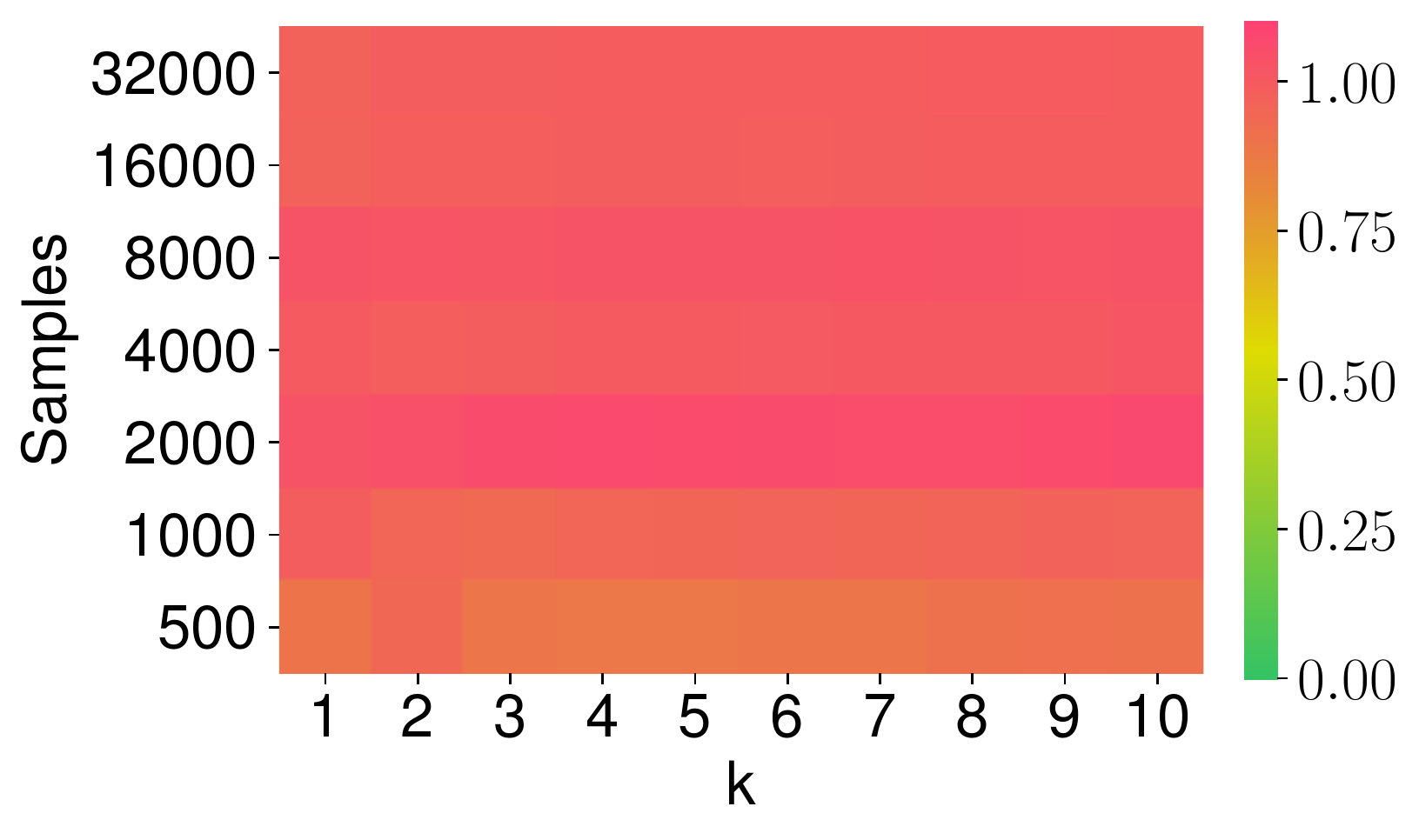} &
        \includegraphics[width=\hyperparamtablesubfloatwidth]{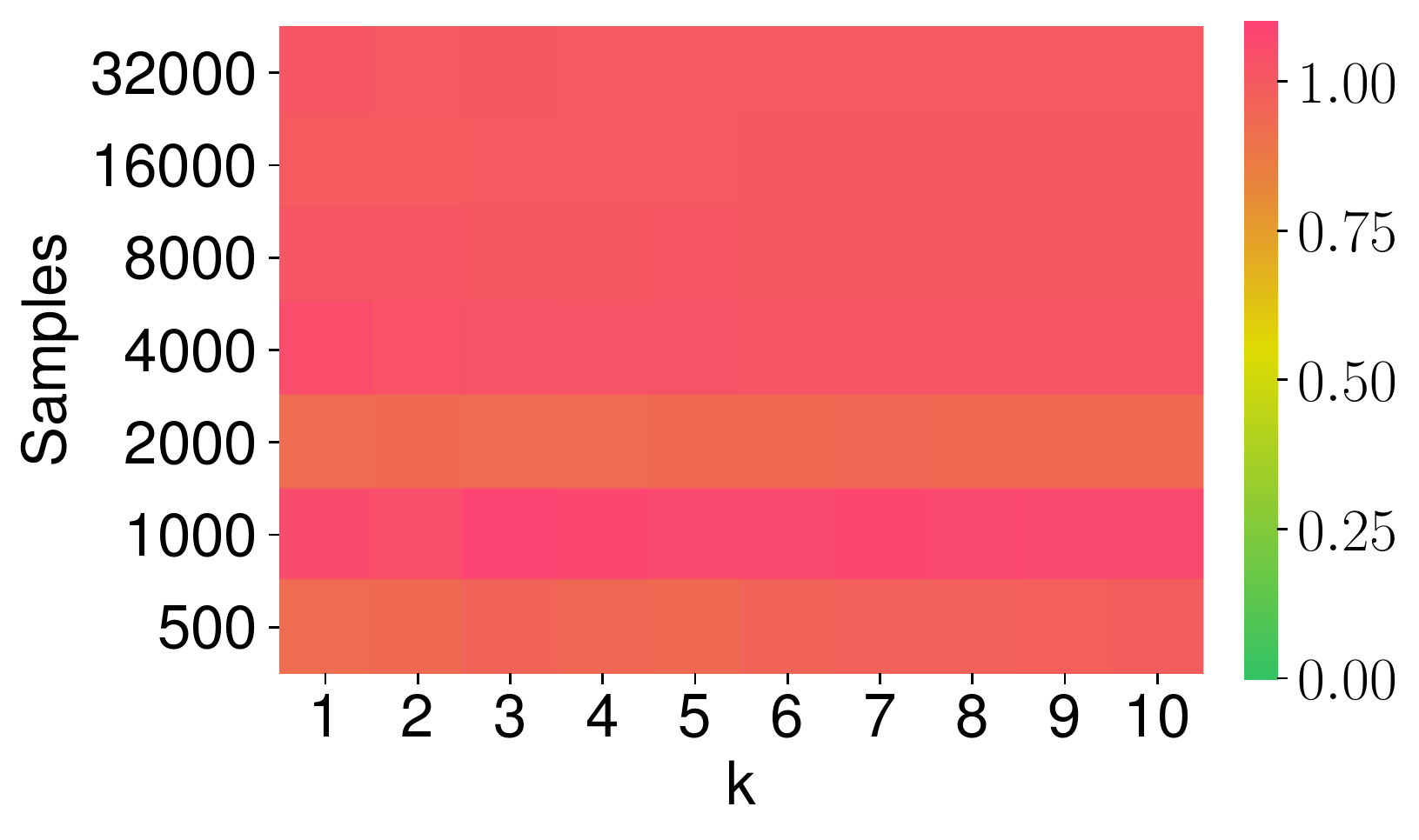} &
        \includegraphics[width=\hyperparamtablesubfloatwidth]{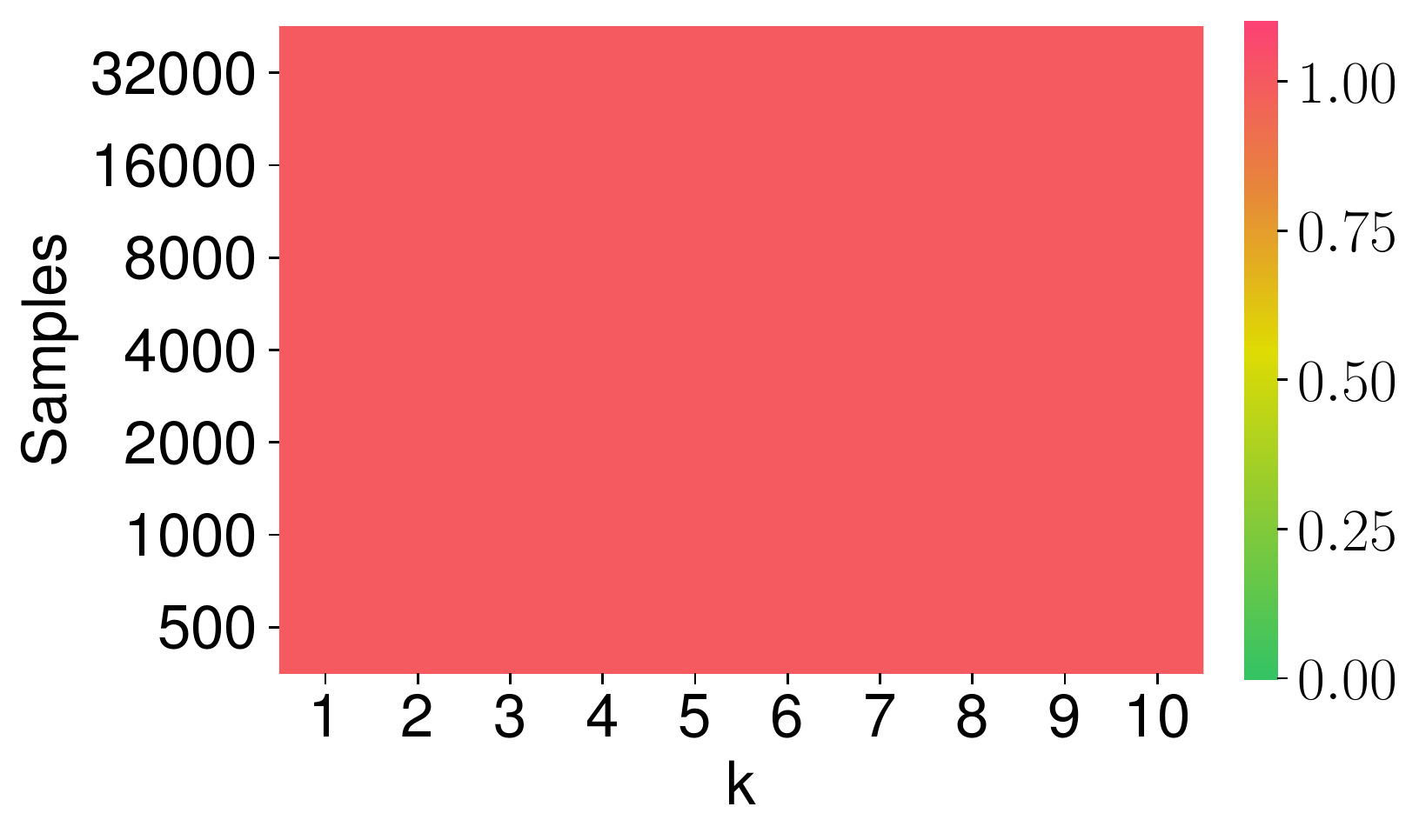} \\
        \vspace{-1.4em} & \\
        \multicolumn{2}{c}{\textbf{Recall}} && \multicolumn{3}{c}{\textbf{Coverage}} \\
        \cline{1-2} \cline{4-6}
        \vspace{-.8em} & \\
        {\small Gaussian} & {\small FFHQ} && {\small Gaussian} & {\small FFHQ} & {\small Analytic} \\
        \includegraphics[width=\hyperparamtablesubfloatwidth]{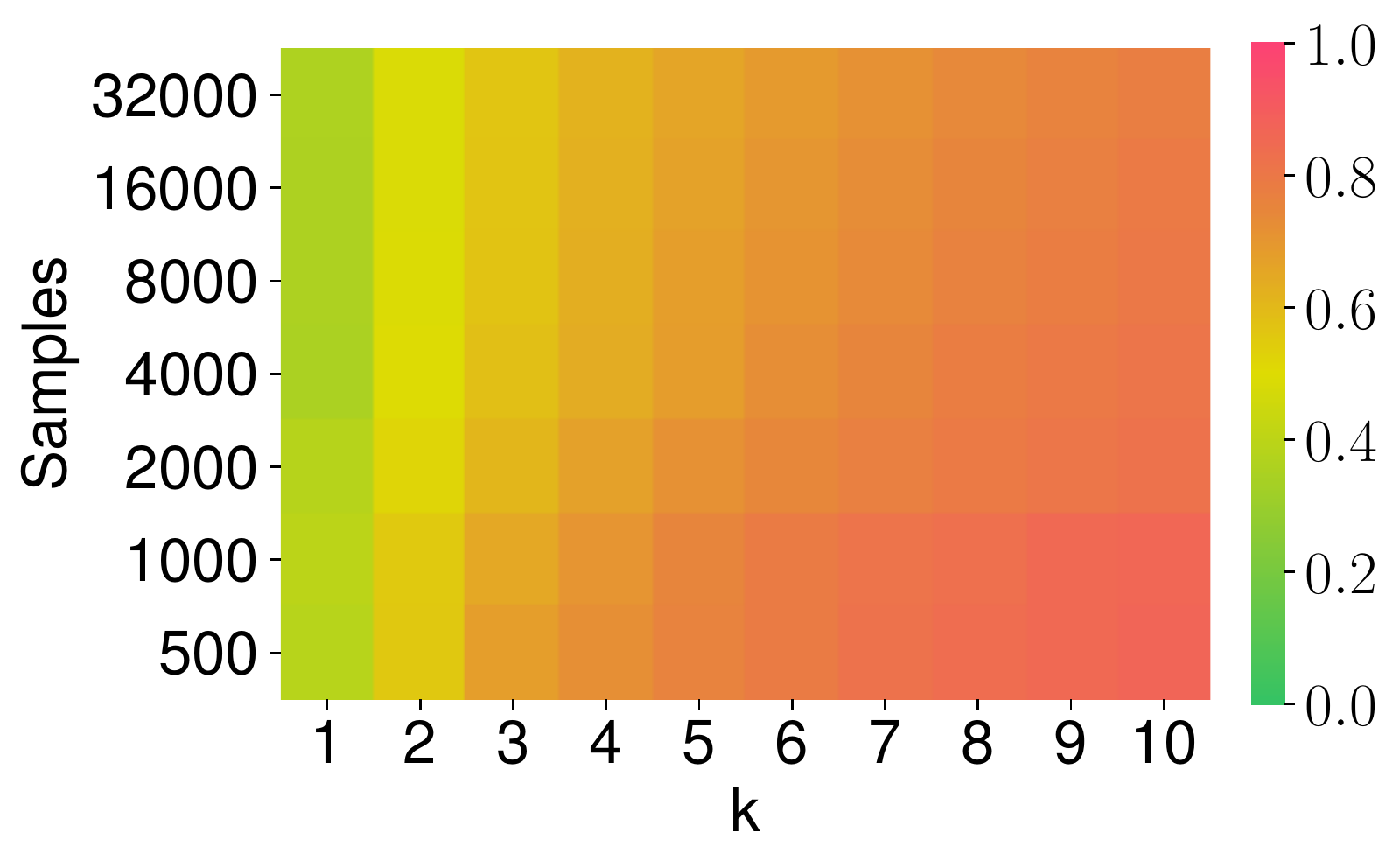} &
        \includegraphics[width=\hyperparamtablesubfloatwidth]{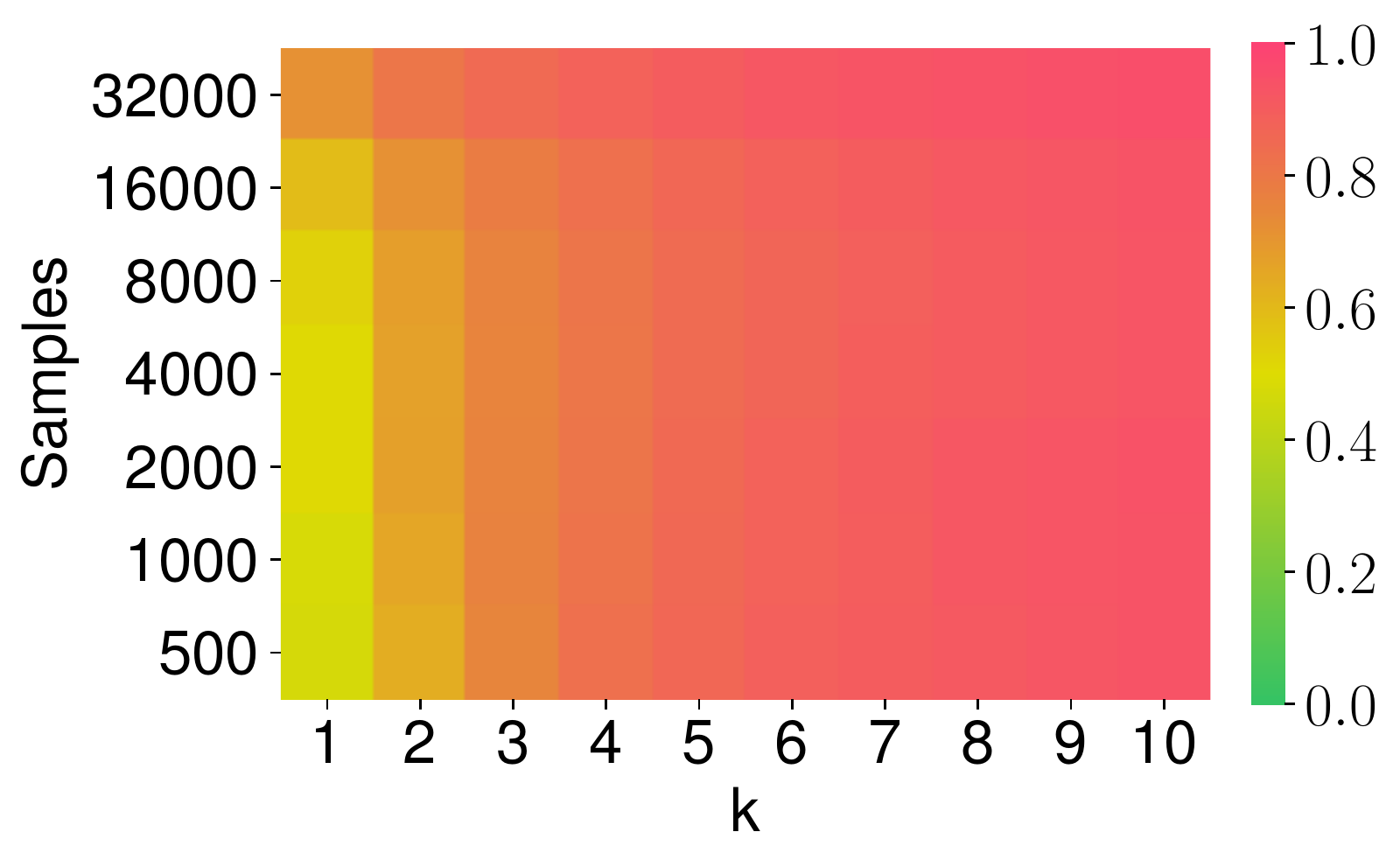} &&
        \includegraphics[width=\hyperparamtablesubfloatwidth]{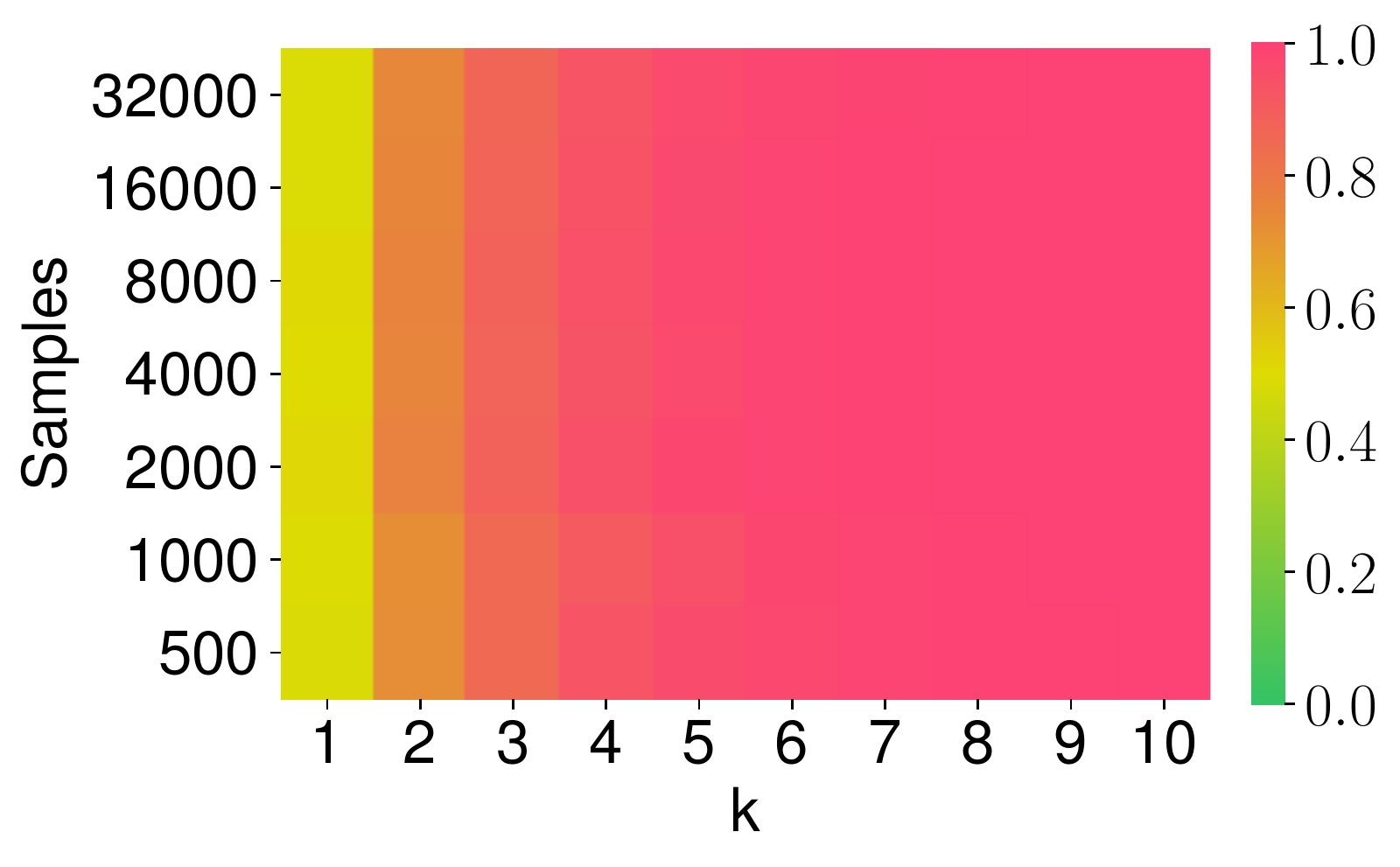} &
        \includegraphics[width=\hyperparamtablesubfloatwidth]{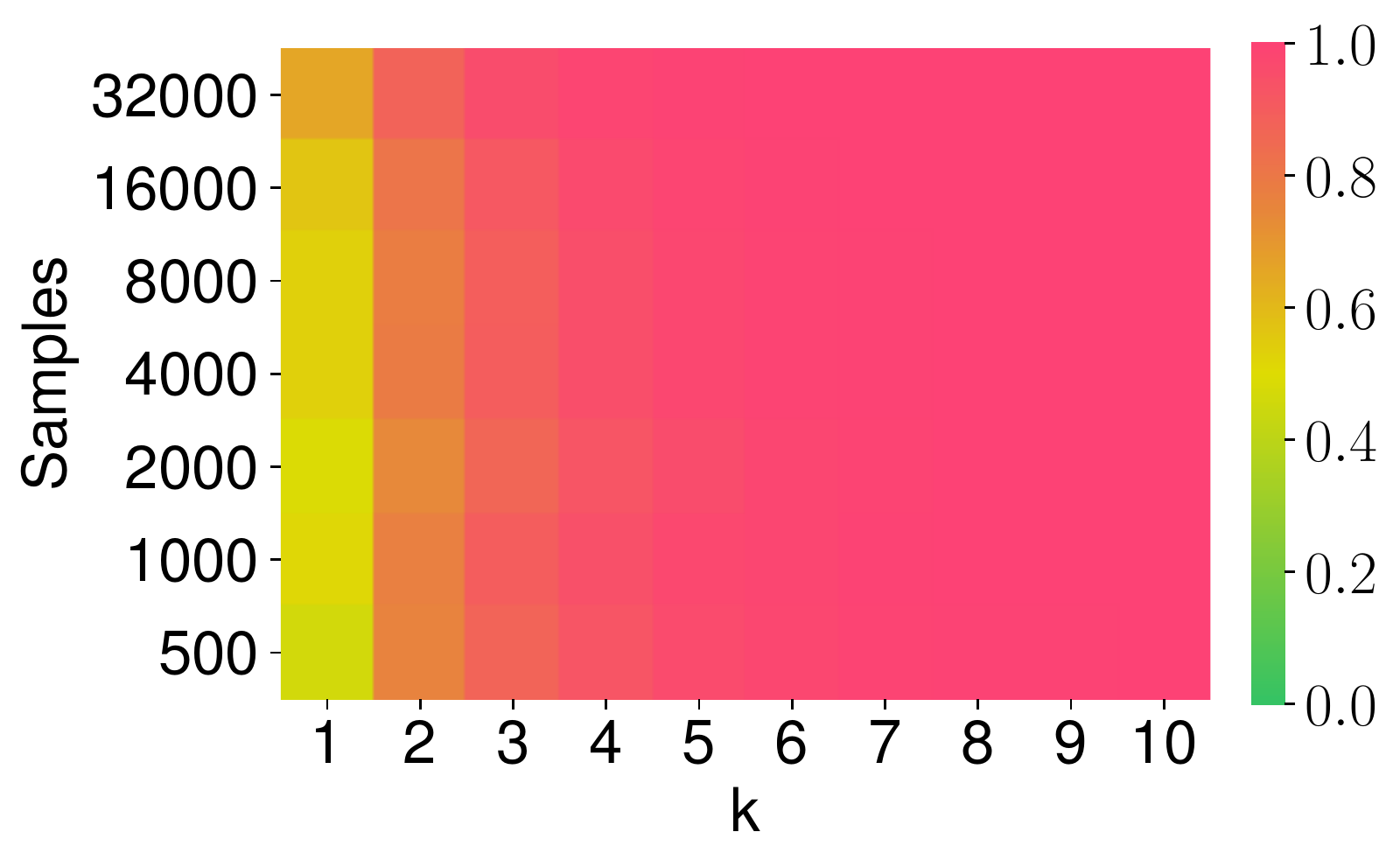} &
        \includegraphics[width=\hyperparamtablesubfloatwidth]{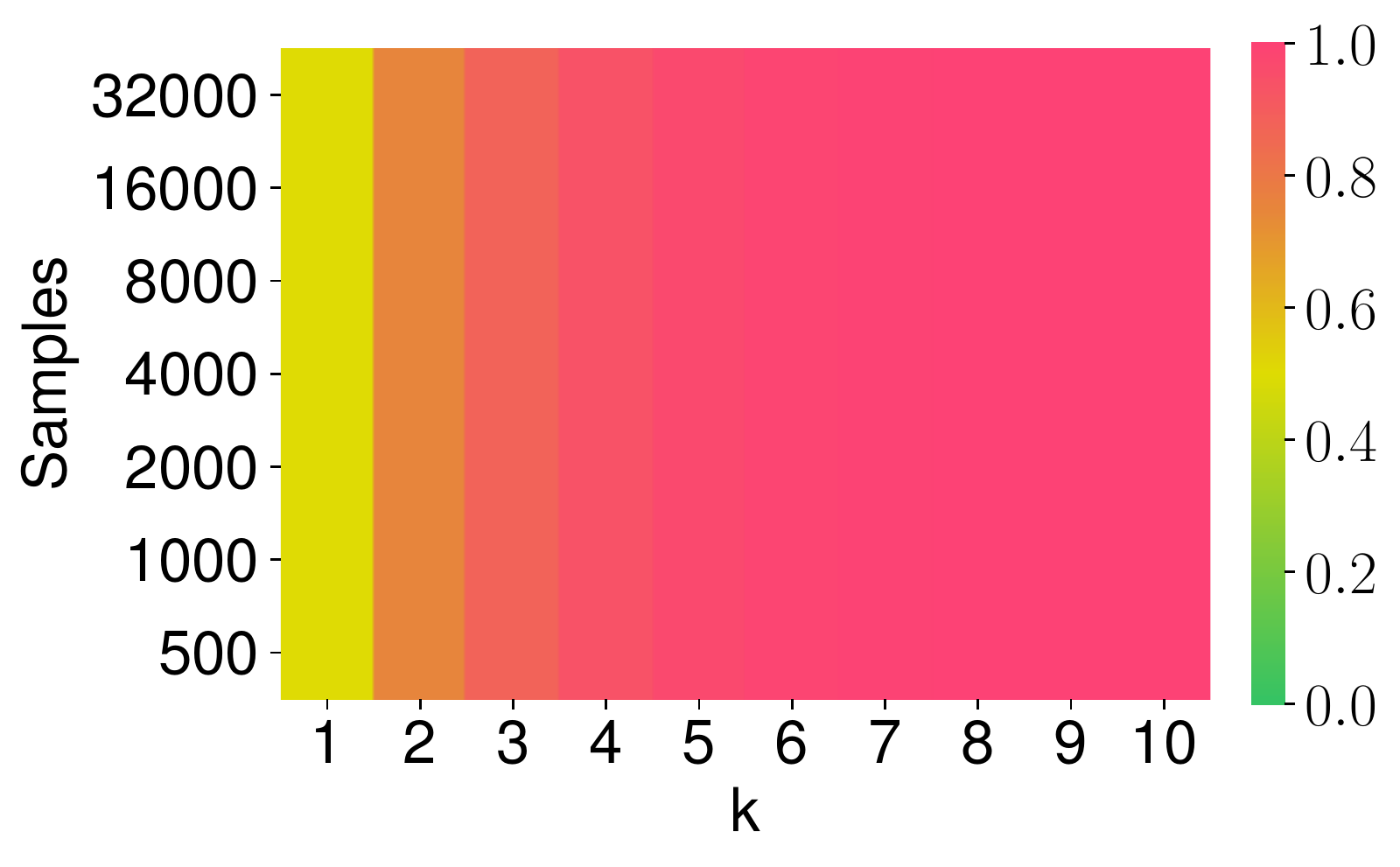} \\
    \end{tabular}
    \vspace{-1em}
    \caption{\small \textbf{Metrics under identical real and fake distributions.} Results with Gaussian and FFHQ data are shown. For density and coverage, the expected values derived in \S\ref{subsec:theory_density_converage} are plotted. In each case, the values are shown with respect to the varying number of samples $N=M$ (assume the same number of real and fake samples) and the nearest neighbours $k$.}
    \label{fig:empirical_metrics_identical_real_fake}
\end{figure*}

\textbf{Analytic derivations for D\&C.}
We derive the expected values of D\&C under the identical real and fake.
\begin{lemma}
$\mathbb{E}[\text{density}]=1$.
\end{lemma}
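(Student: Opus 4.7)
The plan is to use linearity of expectation and then reduce each summand to a simple exchangeability calculation. Writing
\begin{align}
\mathbb{E}[\text{density}]
=\frac{1}{kM}\sum_{j=1}^{M}\sum_{i=1}^{N}\mathbb{P}\!\left[Y_j\in B(X_i,\text{NND}_k(X_i))\right],
\end{align}
I would fix a pair $(i,j)$ and argue that the probability inside equals $k/N$. Summing then gives $\frac{1}{kM}\cdot M\cdot N\cdot\frac{k}{N}=1$, which is exactly the claim.

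To compute $\mathbb{P}[Y_j\in B(X_i,\text{NND}_k(X_i))]$, I would condition on $X_i=x$ and compare the $N$ distances
\[
D_\ell:=\|X_\ell-x\|\quad(\ell\neq i),\qquad D':=\|Y_j-x\|.
\]
Under $P\stackrel{d}{=}Q$ and independence of reals and fakes, these $N$ random variables are i.i.d.\ given $X_i=x$. The event $Y_j\in B(X_i,\text{NND}_k(X_i))$ is precisely the event that $D'$ is strictly smaller than the $k$-th order statistic of $\{D_\ell\}_{\ell\neq i}$, i.e.\ that $D'$ ranks among the smallest $k$ of the $N$ variables $\{D_\ell\}_{\ell\neq i}\cup\{D'\}$. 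By exchangeability of i.i.d.\ continuous random variables, each of the $N$ rank positions is equally likely for $D'$, so this probability is $k/N$. Integrating over $X_i$ preserves the value, since it does not depend on $x$.

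The main subtlety, and the only real obstacle, is the handling of ties: the exchangeability-of-ranks argument assumes that the $N$ distances are almost surely distinct, which follows when the embedded distribution is absolutely continuous on $\mathbb{R}^D$. This is the implicit standing assumption in the preceding discussion (the spheres in the manifold definition and the use of strict inequalities in the indicator are consistent with it), and should be flagged explicitly. Apart from this, the argument is a one-line exchangeability computation combined with linearity of expectation.
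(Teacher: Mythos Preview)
Your proposal is correct and follows essentially the same exchangeability-of-ranks argument as the paper: both reduce $\mathbb{E}[\text{density}]$ via linearity to $\tfrac{N}{k}\,\mathbb{P}[Y_j\in B(X_i,\text{NND}_k(X_i))]$ and then argue that among the $N$ i.i.d.\ distances $\{\|X_\ell-X_i\|\}_{\ell\neq i}\cup\{\|Y_j-X_i\|\}$ the fake distance lands in any rank with probability $1/N$, giving $k/N$. Your explicit conditioning on $X_i=x$ and your remark on ties are minor clarifications the paper glosses over, but the route is the same.
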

\begin{proof}
    The expected density boils down to 
    \begin{align}
        \mathbb{E}[\text{density}]
        &=\frac{1}{k}\sum_{i=1}^{N}\mathbb{P}(B^k_N) 
        = \frac{N}{k}\mathbb{P}(B^k_N)
        \nonumber
    \end{align}
    where $B^k_N$ is the event where $\|Y-X_1\|$ is at most $k^{\text{th}}$ smallest among the random variables $\mathbf{S}:=\{\|X_1-X_2\|,\cdots,\|X_1-X_N\|\}$. Since the random variables $\mathbf{S}\cup\{\|Y-X_1\|\}$ are identical and independently distributed, any particular ranking of them are equally likely with probability $1/N!$. Since the number of rankings of $N$ values with one of them at a particular rank is $(N-1)!$, we have
    \begin{align}
    \mathbb{P}(B^k_N) = k\cdot (N-1)! \cdot \frac{1}{N!}=\frac{k}{N}.
    \nonumber
    \end{align}
\end{proof}
\begin{lemma}
    \small
    \begin{align}
    \mathbb{E}[\text{coverage}]
    &=1-\frac{(N-1)\cdots(N-k)}{(M+N-1)\cdots(M+N-k)}.
    \label{eq:expected_coverage}
    \end{align}
    Moreover, as $M=N\rightarrow\infty$, $\mathbb{E}[\text{coverage}]\rightarrow 1-\frac{1}{2^k}$.
\end{lemma}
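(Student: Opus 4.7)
The plan is to mimic the exchangeability / rank-counting argument used for density, but over the pooled set of distances $\{\|X_1-X_i\|\}_{i=2}^N \cup \{\|X_1-Y_j\|\}_{j=1}^M$. By exchangeability of $X_1,\ldots,X_N$ together with linearity of expectation,
\begin{align}
\mathbb{E}[\text{coverage}] = \mathbb{P}\!\left[\exists\, j:\, \|Y_j-X_1\| < \text{NND}_k(X_1)\right],
\end{align}
so it suffices to compute the probability of the complementary event that all $M$ of the $Y$-distances to $X_1$ are at least $\text{NND}_k(X_1)$.

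Next I would recast this complementary event as a ranking statement. Since $\text{NND}_k(X_1)$ is exactly the $k$-th smallest value in the multiset $\{\|X_1-X_i\|\}_{i=2}^N$, the event ``no $Y_j$ falls inside $B(X_1,\text{NND}_k(X_1))$'' is, almost surely, the same as saying that the $k$ smallest entries of the combined $N+M-1$ distances from $X_1$ to the remaining points all come from the $X$-group. Conditioning on $X_1$, these $N+M-1$ distances are i.i.d.\ copies of $\|X_1-Z\|$ with $Z\sim P$ (so no ties with probability one), and the assignment of $X$/$Y$ labels to ranks is therefore a uniformly random permutation of $N-1$ $X$-labels and $M$ $Y$-labels.

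Sequentially conditioning on the identity of the smallest, second-smallest, \ldots, $k$-th smallest rank yields
\begin{align}
\prod_{j=1}^k \frac{N-j}{N+M-j}=\frac{(N-1)\cdots(N-k)}{(M+N-1)\cdots(M+N-k)},
\end{align}
which is the desired complementary probability and, after subtracting from $1$, gives the stated closed form. For the asymptotic claim, specialising to $M=N$ turns each factor into $\frac{N-j}{2N-j}\to \tfrac12$ for every fixed $j\le k$, so the finite product tends to $2^{-k}$ and $\mathbb{E}[\text{coverage}]\to 1-2^{-k}$.

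I do not anticipate any substantive obstacle. The only point worth stating explicitly is that exchangeability has to be applied \emph{conditionally on} $X_1$, so that the $N-1$ intra-$X$ distances and the $M$ cross $X$-to-$Y$ distances are genuinely i.i.d.\ (via the common distribution of $\|X_1-Z\|$ with $Z\sim P$); once this is in place, the core of the proof is a one-line sampling-without-replacement computation, and the limit is elementary.
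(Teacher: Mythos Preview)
Your proposal is correct and follows essentially the same route as the paper: reduce by symmetry to the complementary event at $X_1$, observe that (conditionally on $X_1$) the $N+M-1$ distances are i.i.d.\ so the labels are uniformly shuffled over ranks, and compute the probability that the $k$ smallest all carry an $X$-label. The only cosmetic difference is that the paper writes this probability as a ratio of binomial coefficients $\binom{M+N-k-1}{M}/\binom{M+N-1}{M}$ rather than your sequential product $\prod_{j=1}^k \frac{N-j}{N+M-j}$, and your explicit remark about conditioning on $X_1$ is a welcome clarification the paper leaves implicit.
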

\begin{proof}
    The expected coverage is estimated as
    \begin{align}
        \mathbb{E}[\text{coverage}]
        &=\frac{1}{N}\sum_{i=1}^N\mathbb{P}(\underbrace{\exists\text{ $j$ s.t. } Y_j\in B(X_i,\text{NND}_k(X_i))}_{(\star)})\nonumber \\
        &=1-\mathbb{P}(\underbrace{\forall\text{ $j$, } Y_j\notin B(X_1,\text{NND}_k(X_1))}_{(\star\star)})
        \nonumber
    \end{align}
    using the fact that the events $(\star)$ is symmetrical with respect to $i$. 
    The event $(\star\star)$ can be re-written as 
    \begin{align}
        \underset{j}{\min}\,\,\|Y_j-X_1\|\geq \|X_1-X_\beta\| \text{ for at most $k$ }\beta\text{'s.}
        \nonumber
    \end{align}
    We write $Z_\beta:=\|X_1-X_\beta\|$ for $\beta\in\{2,\cdots,N\}$ and $\widetilde{Z_j}:=\|X_1-Y_j\|$ for $j\in\{1,\cdots,M\}$. Then, the set $\{Z_\beta\}_{\beta=2}^{N}\cup\{\widetilde{Z_j}\}_{j=1}^{M}$ of $N+M-1$ random variables is independent and identically distributed. The probability of $(\star\star)$ can be equivalently described as:
    \begin{quote}
        Assume there are $M+N-1$ non-negative real numbers $Z$ distributed according to $\overset{\text{iid}}{\sim}\mathbb{P}$. Colour $M$ of them red uniformly at random and colour the rest $N-1$ blue. What is the chance that the $k-1$ smallest among $Z$ are all coloured blue?
    \end{quote}
    Since any assignment of red and blue colours is equally likely, we compute the probability by counting the ratio of possible colour assignments where $k$ smallest elements are coloured blue. The formula is written as
    \begin{align}
    \frac{{{M+N-k-1}\choose{M}}}{{{M+N-1}\choose{M}}}
    =\frac{(N-1)\cdots(N-k)}{(M+N-1)\cdots(M+N-k)}.
    \nonumber
    \end{align}
\end{proof}

Note that the expected values do not depend upon the distribution type or the dimensionality $D$ of the data.

\begin{figure*}[!t]
    \newcommand{\toytablesubfloatwidth}{.3\linewidth}
    \centering
    \begin{tabular}{ccc}
    \includegraphics[width=\toytablesubfloatwidth]{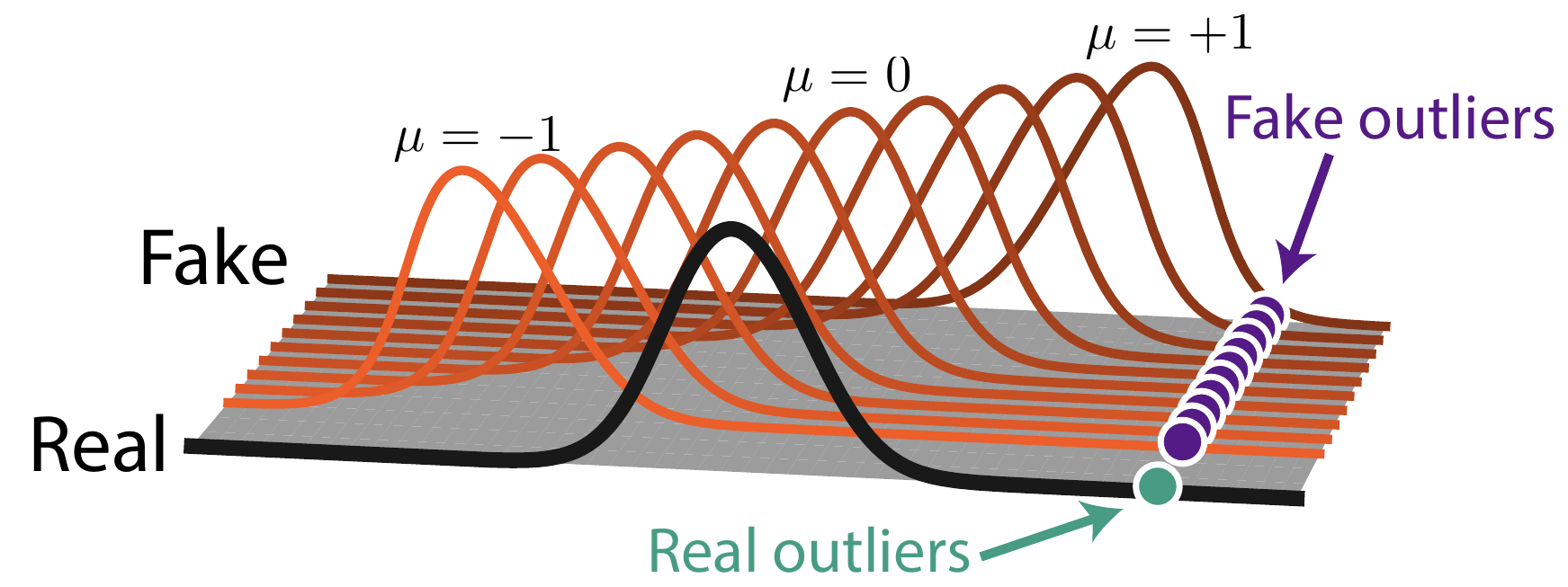} & \includegraphics[width=\toytablesubfloatwidth]{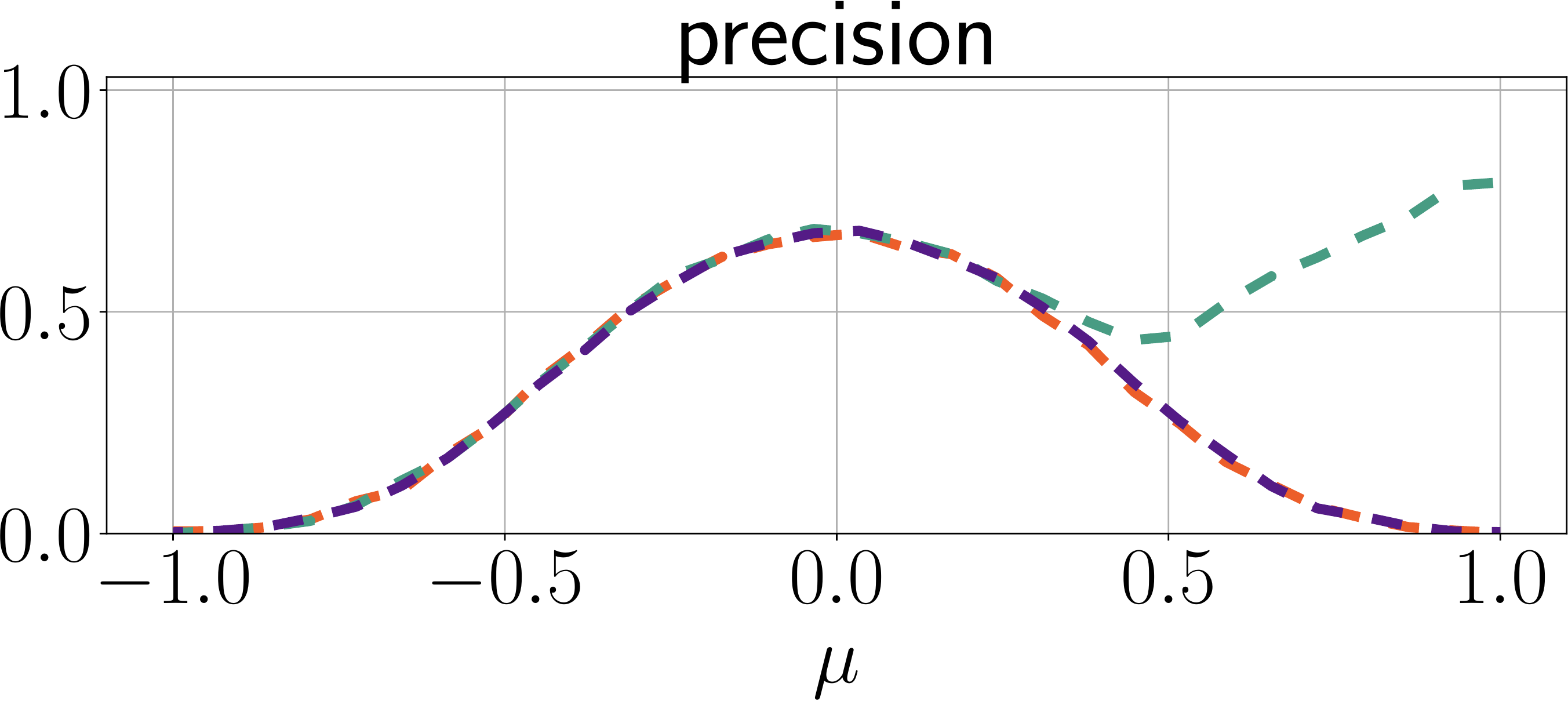} & \includegraphics[width=\toytablesubfloatwidth]{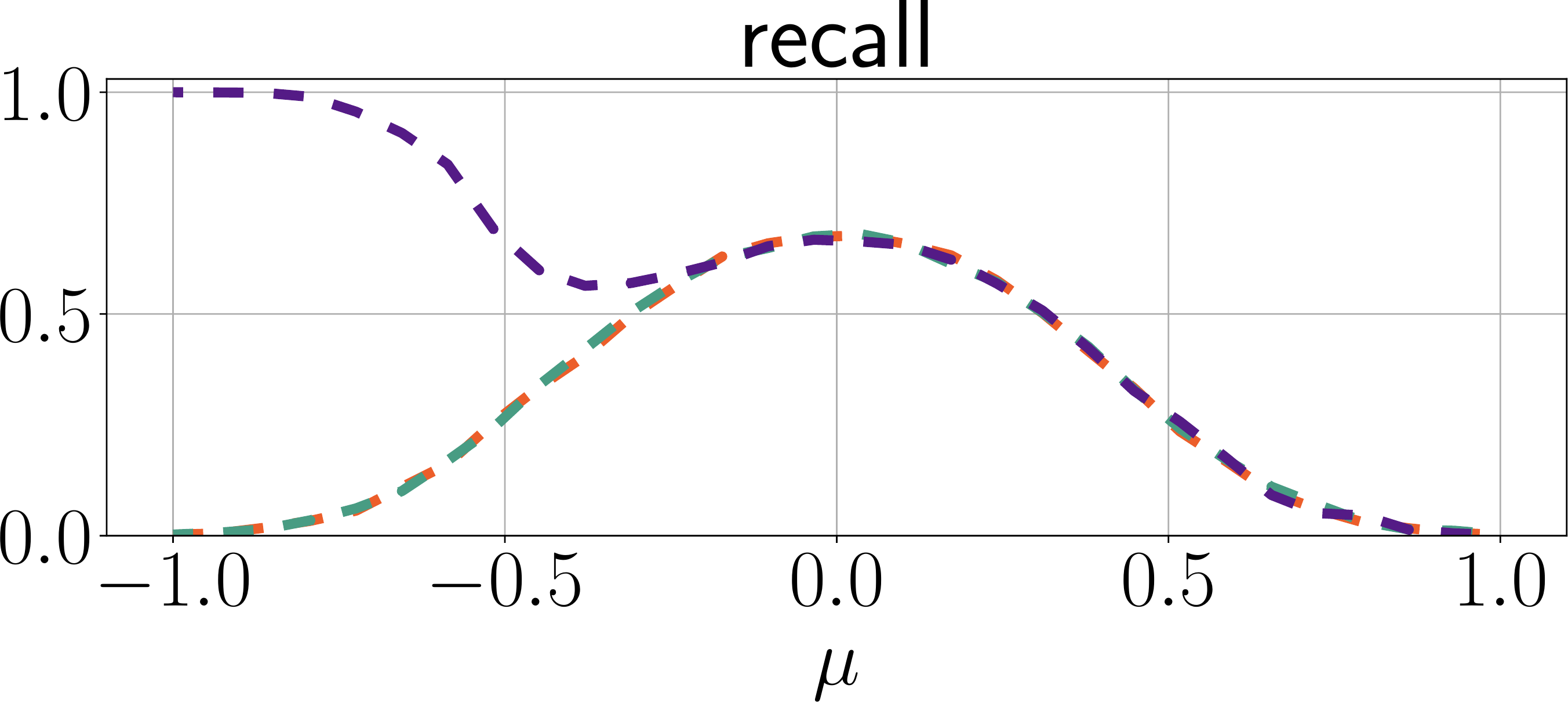}\\
\includegraphics[width=\toytablesubfloatwidth]{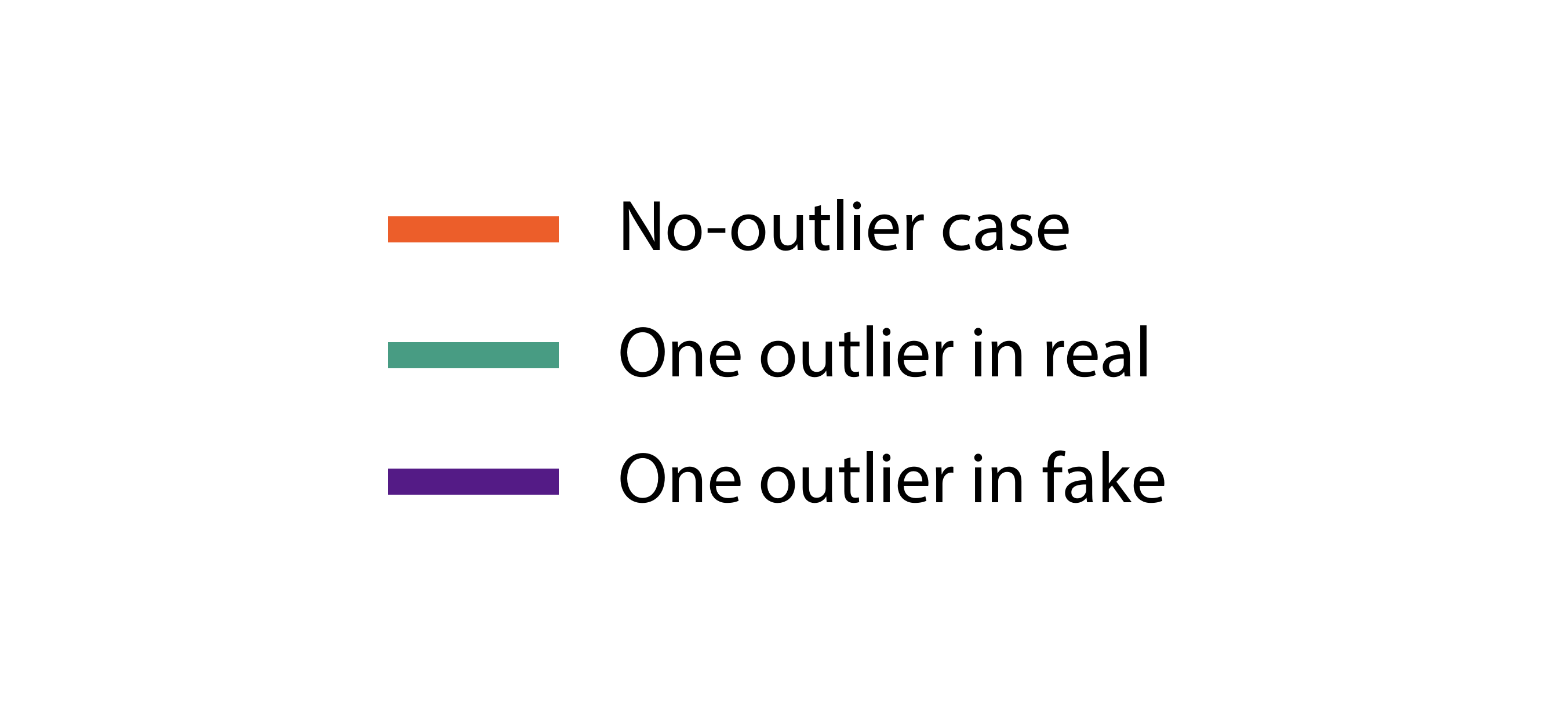} & \includegraphics[width=\toytablesubfloatwidth]{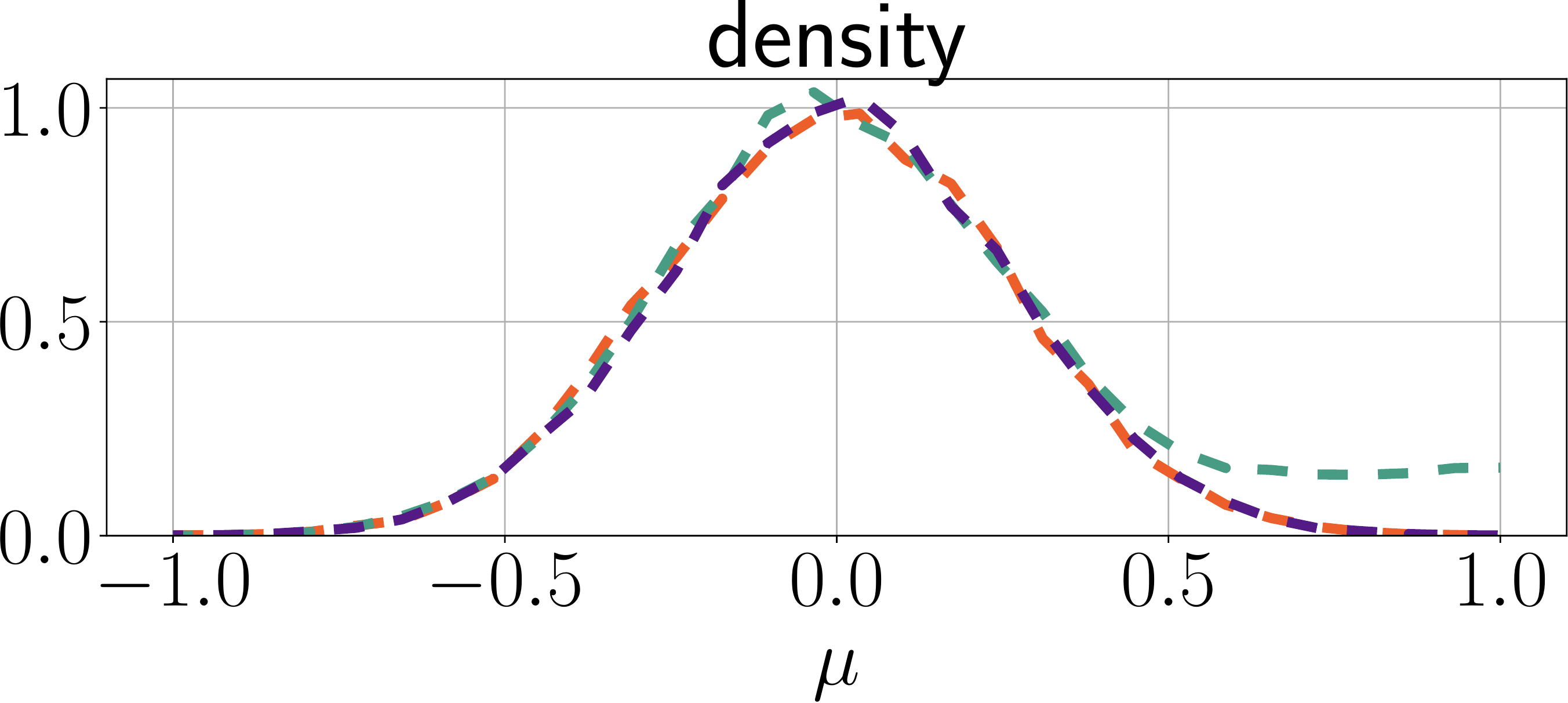} & \includegraphics[width=\toytablesubfloatwidth]{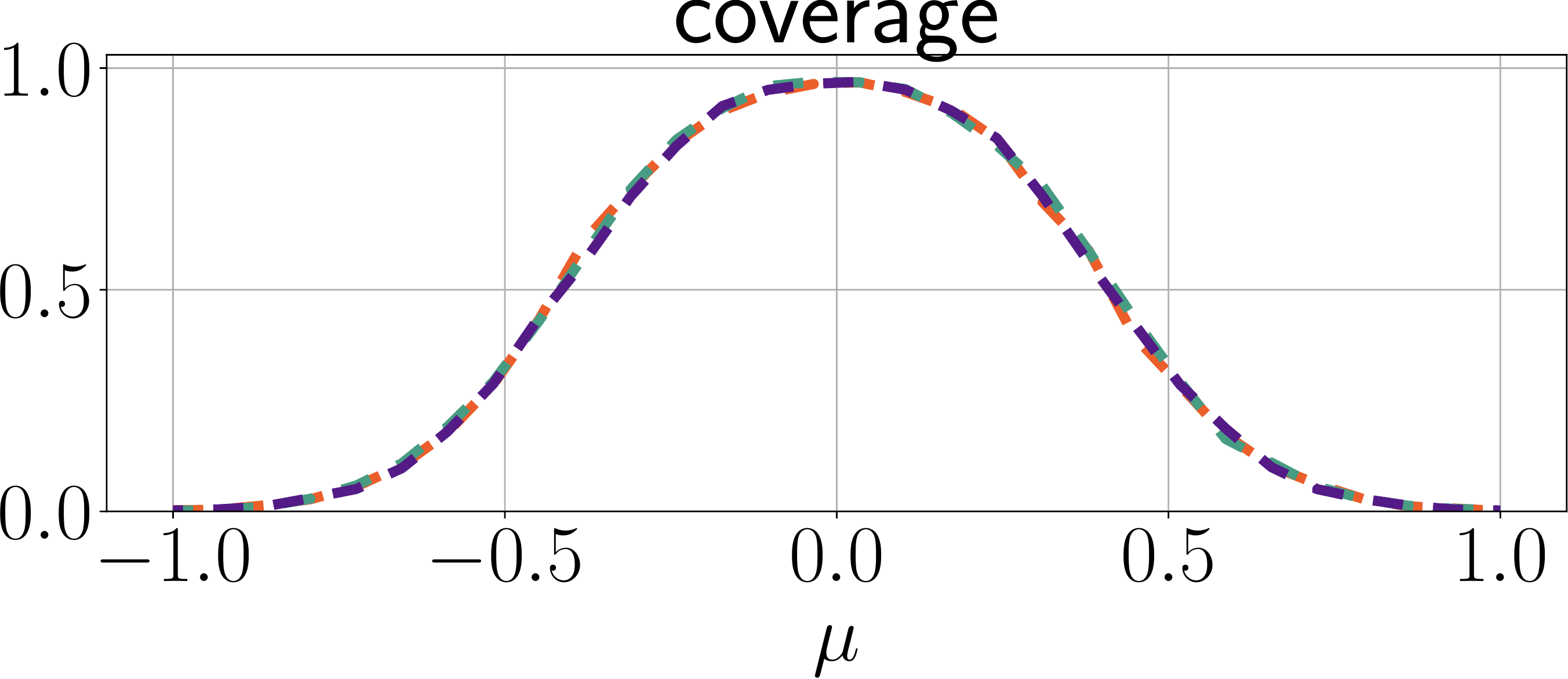}
    \end{tabular}
    \vspace{-1em}
    \caption{\small \textbf{Robustness to outliers for toy data.} Behaviour of the four metrics when the real distribution is fixed $X\sim N(0,I)$ and fake distribution $Y\sim N(\mu,I)$ shifts with $\mu\in [-1,1]$. We further consider two outlier scenarios where a sample at $x=+3$ is added either to the set of real or fake samples.}
\label{fig:fprdc_toy_outliers}
    \vspace{1em}
    \begin{tabular}{c}    \includegraphics[width=0.25\linewidth]{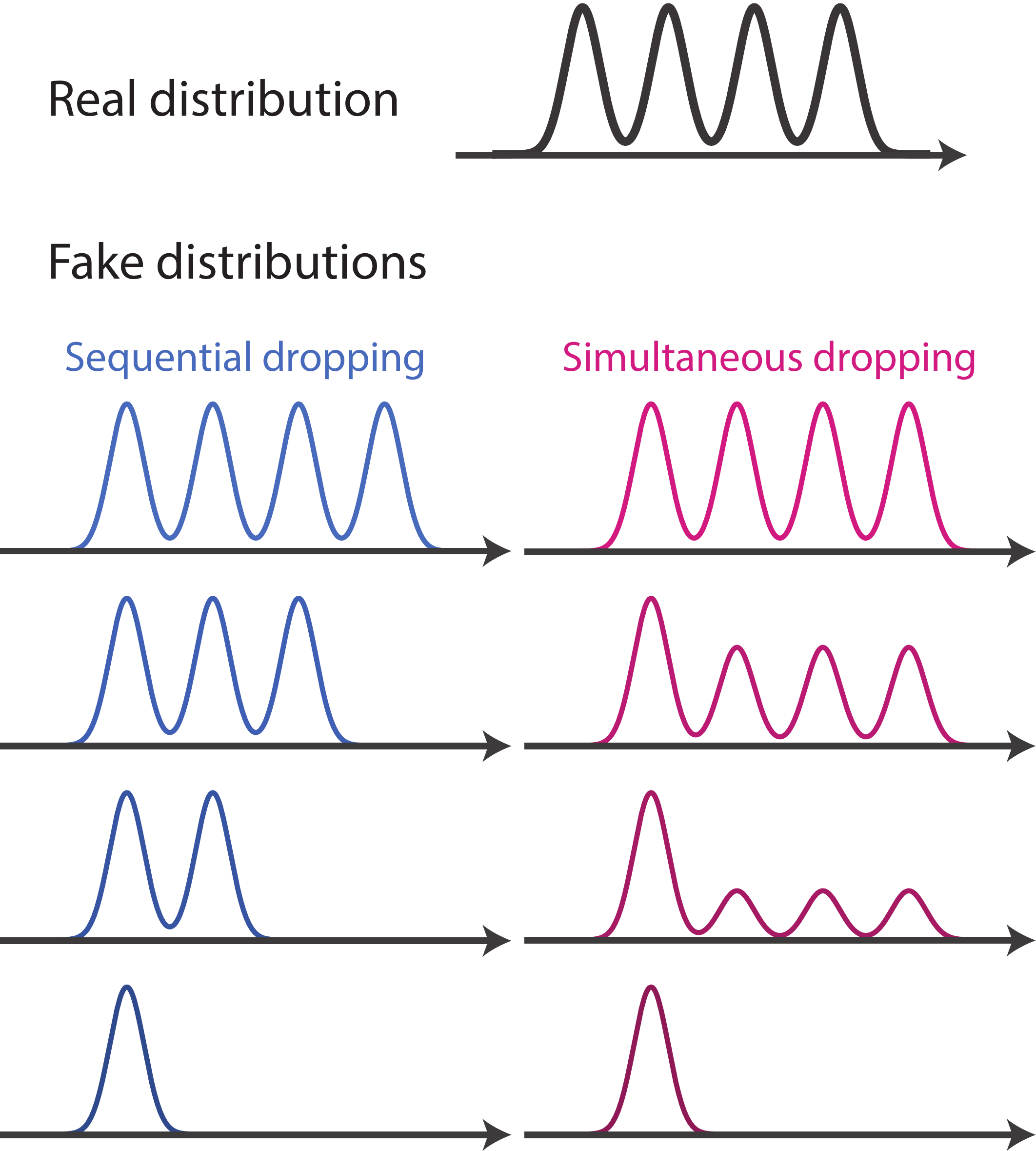}
    \end{tabular}
    \begin{tabular}{cc}
    \includegraphics[width=\toytablesubfloatwidth]{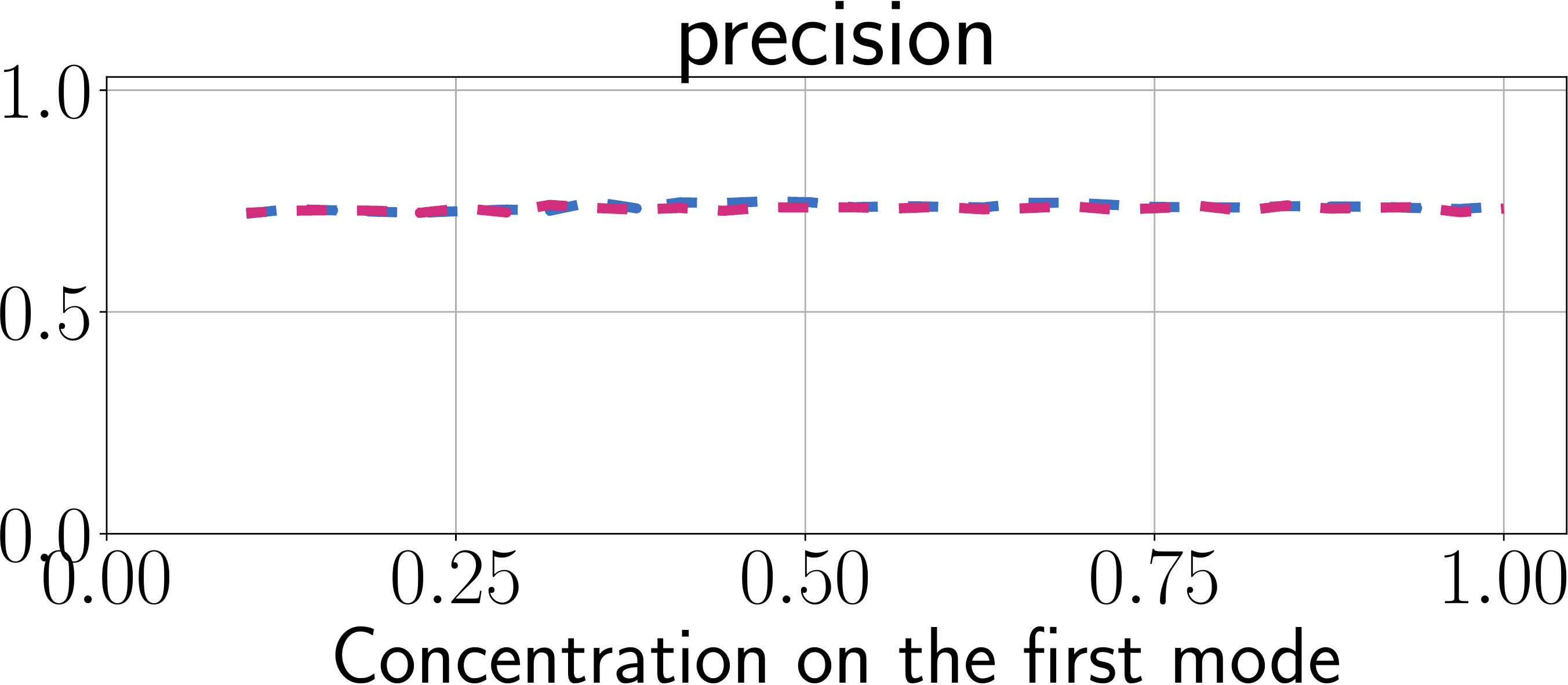} & \includegraphics[width=\toytablesubfloatwidth]{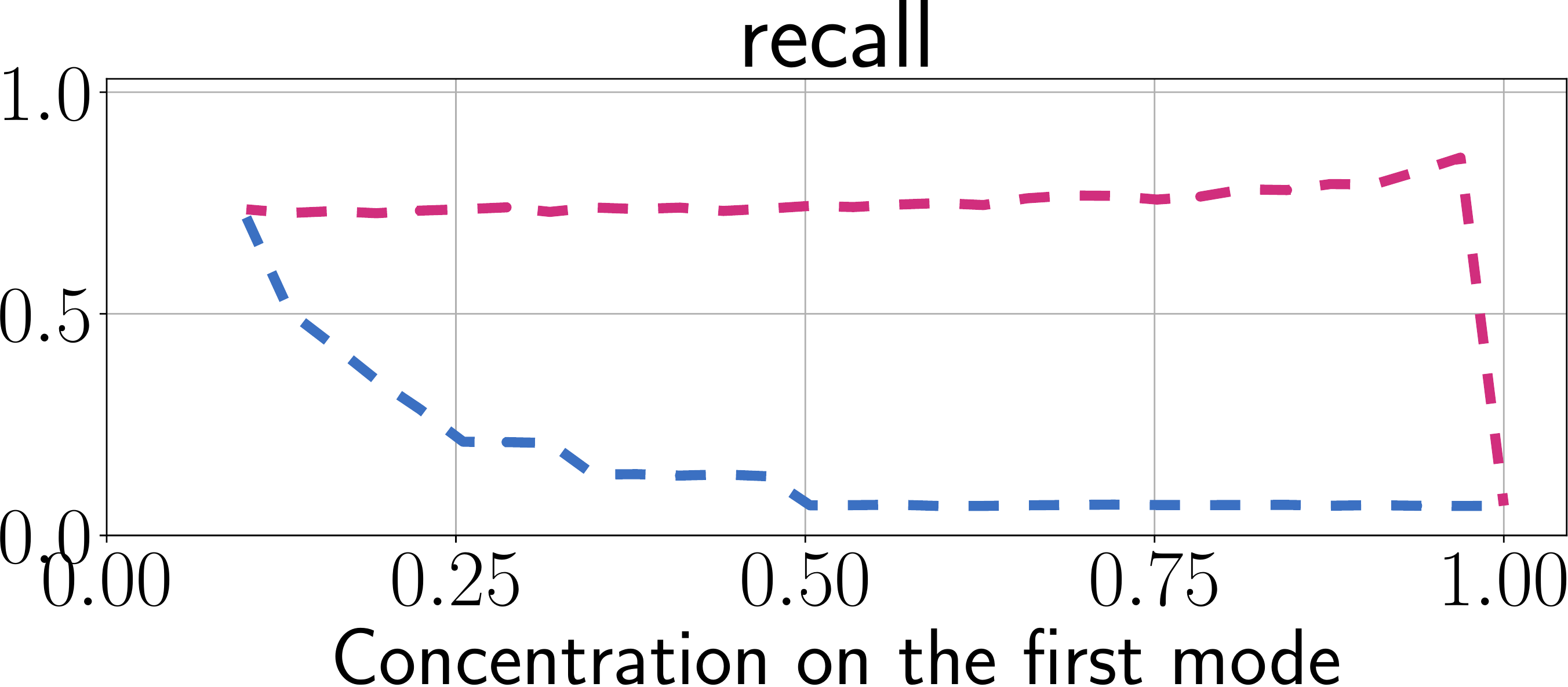}\\
    \includegraphics[width=\toytablesubfloatwidth]{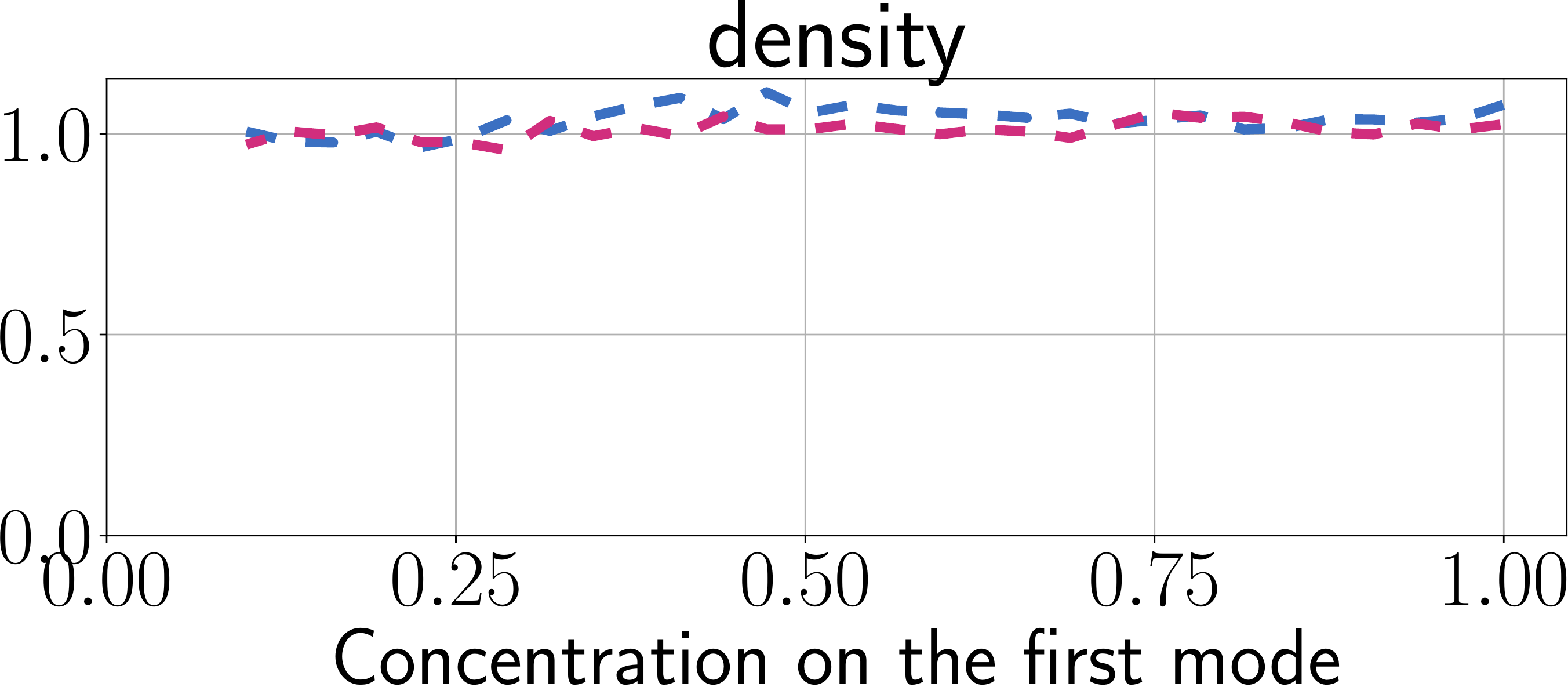} & \includegraphics[width=\toytablesubfloatwidth]{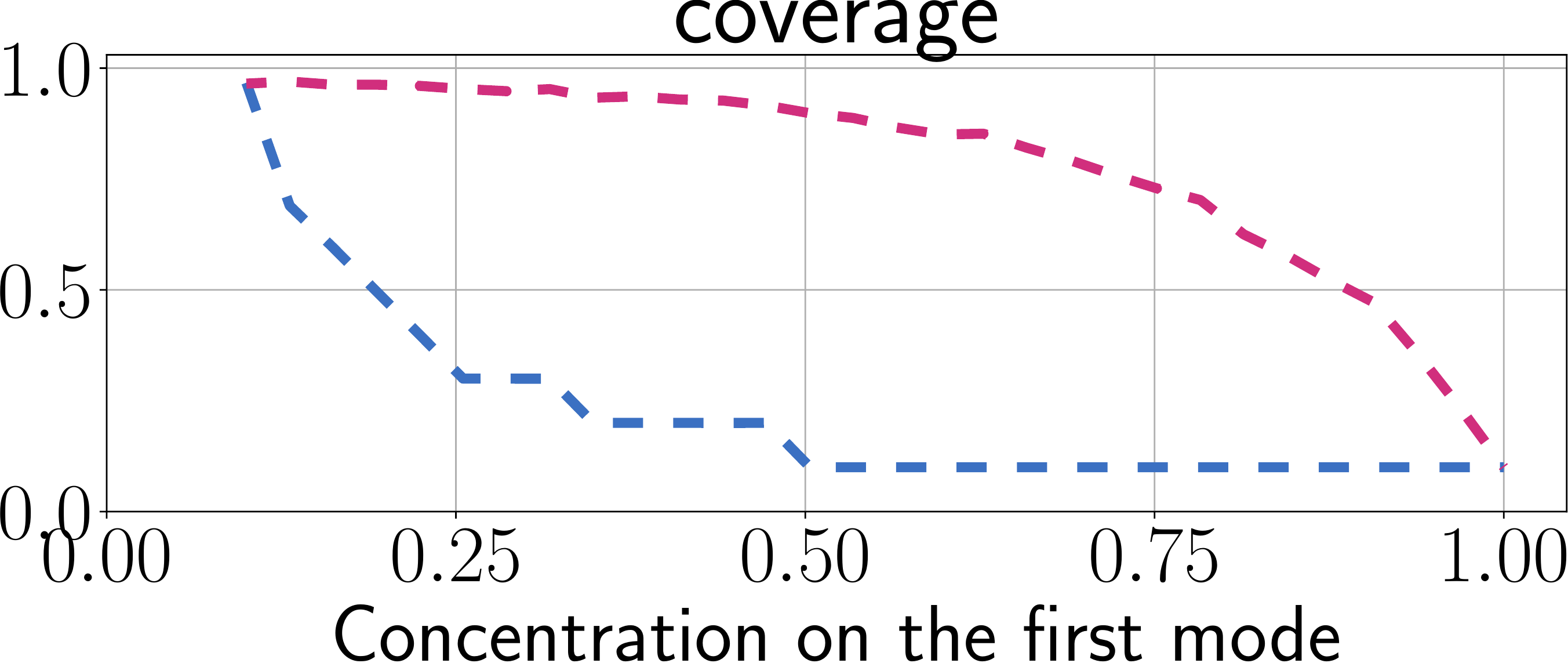}
    \end{tabular}
    \caption{\small \textbf{Sensitivity to mode dropping for toy data.} Assuming the mixture-of-Gaussian real distribution $X$, we simulate two paths in which the perfect fake distribution $Y$ drops to a single mode. Under the \textit{sequential dropping}, $Y$ drops a mode one at a time; under the \textit{simultaneous dropping}, $Y$ gradually decreases samples in all but one mode. Corresponding behaviours of the four metrics are shown.}
\label{fig:fprdc_toy_mode_dropping}
\end{figure*}

\subsection{Hyperparameter selection}

Given the analytic expression for the expected D\&C for identical real and fake distributions, we can systematically choose $k$, $M$, and $N$. We set the aim of hyperparameter selection as: $\mathbb{E}[\text{coverage}]>1-\epsilon$ for identical real and fake. 

Since $\mathbb{E}[\text{coverage}]$ does not depend upon the exact distribution type or dimensionality of data (Equation~\ref{eq:expected_coverage}), the hyperparameters chosen as above will be effective across various data types. We verify the consistency of D\&C across data types in Figure~\ref{fig:empirical_metrics_identical_real_fake}. It contains plots of the P\&R and D\&C values for identical real and fake distributions from (1) 64-dimensional multivariate standard Gaussians, (2) 4096-dimensional ImageNet pre-trained VGG embeddings of the FFHQ face images, and (3) analytic estimations in \S\ref{subsec:theory_density_converage}. While P\&R exhibits significantly different estimated values across the Gaussians and the FFHQ embeddings, D\&C metrics agree on all three types of estimates, confirming the independence on data type. This conceptual advantage leads to a confident choice of evaluation hyperparameters that are effective across data types.

In practice, we choose the hyperparameters to achieve $\mathbb{E}[\text{coverage}]>0.95$. For the sake of symmetry, we first set $M=N$. We then set $M=N=10\,000$ to ensure a good approximation $\mathbb{E}[\text{coverage}]\approx 1-\frac{1}{2^k}$, while keeping the computational cost tractable. $k=5$ is then sufficient to ensure $\mathbb{E}[\text{coverage}]\approx 0.969>0.95$. The exact principle behind the choices of those values for P\&R ($k=3$, $M=N=50\,000$) is unknown.

\section{Experiments}
\label{sec:experiments}

We empirically assess the proposed density and coverage (D\&C) metrics and compare against the improved precision and recall (P\&R)~\cite{ipr2019}. Evaluating evaluation metrics is difficult as the ground truth metric values are often unavailable (which is why the evaluation metrics are proposed in the first place). We carefully select sanity checks with toy and real-world data where the desired behaviours of evaluation metrics are clearly defined. At the end of the section, we study the embedding pipeline (\S\ref{subsec:evaluation_pipeline}) and advocate the use of randomly initialised embedding networks under certain scenarios.

\subsection{Empirical results on density and coverage}
We build several toy and real-world data scenarios to examine the behaviour of the four evaluation metrics: P\&R and D\&C. We first show results on toy data where the desired behaviours of the fidelity and diversity metrics are well-defined (\S\ref{subsec:sanity_artificial}). We then move on to diverse real-world data cases to show that the observations extend to complex data distributions (\S\ref{subsec:sanity_real}). 

\subsubsection{Sanity checks with toy data}
\label{subsec:sanity_artificial}

We assume Gaussian or mixture-of-Gaussian distributions for the real $X$ and fake $Y$ in $\mathbb{R}^{64}$ ($D=64$). We simulate largely two scenarios: (1) $Y$ moves away from $X$ (Figure~\ref{fig:fprdc_toy_outliers}) and (2) $Y$ gradually fails to cover the modes in $X$ (Figure~\ref{fig:fprdc_toy_mode_dropping}). We discuss each result in detail below.

\textbf{Translating $Y$ away from $X$.}
We set $X\sim N(0,I)$ and $Y\sim N(\mu \mathbf{1} ,I)$ in $\mathbb{R}^{64}$ where $\mathbf{1}$ is the vector of ones and $I$ is the identity matrix. We study how the metrics change as $\mu$ varies in $[-1, 1]$. In Figure~\ref{fig:fprdc_toy_outliers}, without any outlier, this setting leads to the decreasing values of P\&R and D\&C as $\mu$ moves away from $0$, the desired behaviour for all metrics. However, P\&R show a pathological behaviour when the distributions match ($\mu=0$): their values are far below 1 (0.68 precision and 0.67 recall). D\&C, on the other hand, achieve values close to 1 (1.06 density and 0.97 coverage). D\&C detect the distributional match better than P\&R.

\textbf{Translating with real or fake outliers.}
We repeat the previous experiment with exactly one outlier at $\mathbf{1}\in\mathbb{R}^{64}$ in either real or fake samples (Figure ~\ref{fig:fprdc_toy_outliers}). Robust metrics must not be affected by one outlier sample. However, P\&R are vulnerable to this one sample; precision increases as $\mu$ grows above $0.5$ and recall increases as $\mu$ decreases below $-0.4$. The behaviour is attributed to the overestimation of the manifold on the region enclosed by the inliers and the outlier (\S\ref{subsec:problem_precision_recall}). This susceptibility is a serious issue in practice because outliers are common in realistic data and generated outputs. On the other hand, the D\&C measures for the outlier cases largely coincide with the no-outlier case. D\&C are sturdy measures.

\textbf{Mode dropping.}
We assume that the real distribution $X$ is a mixture of Gaussians in $\mathbb{R}^{64}$ with ten modes. We simulate the fake distribution $Y$ initially as identical to $X$, and gradually drop all but one mode. See Figure~\ref{fig:fprdc_toy_mode_dropping} for an illustration. We consider two ways the modes are dropped. (1) Each mode is dropped sequentially and (2) weights on all but one mode are decreased simultaneously. Under the sequential mode dropping, both recall and coverage gradually drop, capturing the decrease in diversity. However, under the simultaneous dropping, recall cannot capture the decrease in diversity until the concentration on the first mode reaches 90\% and shows a sudden drop when the fake distribution becomes unimodal. Coverage, on the other hand, decreases gradually even under the simultaneous dropping. It reliably captures the decrease in diversity in this case.

\subsubsection{Sanity checks with real-world data}
\label{subsec:sanity_real}

Having verified the metrics on toy Gaussians, we assess the metrics on real-world images. As in the toy experiments, we focus on the behaviour of metrics under corner cases including outliers and mode dropping. We further examine the behaviour with respect to the latent truncation threshold $\psi$~\cite{karras2019style}. As the embedding network, we use the \texttt{fc2} layer features of the ImageNet pre-trained VGG16~\cite{ipr2019}.

\begin{figure}[!t]
    \small
    \centering
    \setlength{\tabcolsep}{0.3em}
    \begin{tabular}{c|c}
    \textbf{CelebA} & \textbf{LSUN-Bedroom} \\
    \includegraphics[width=.46\linewidth]{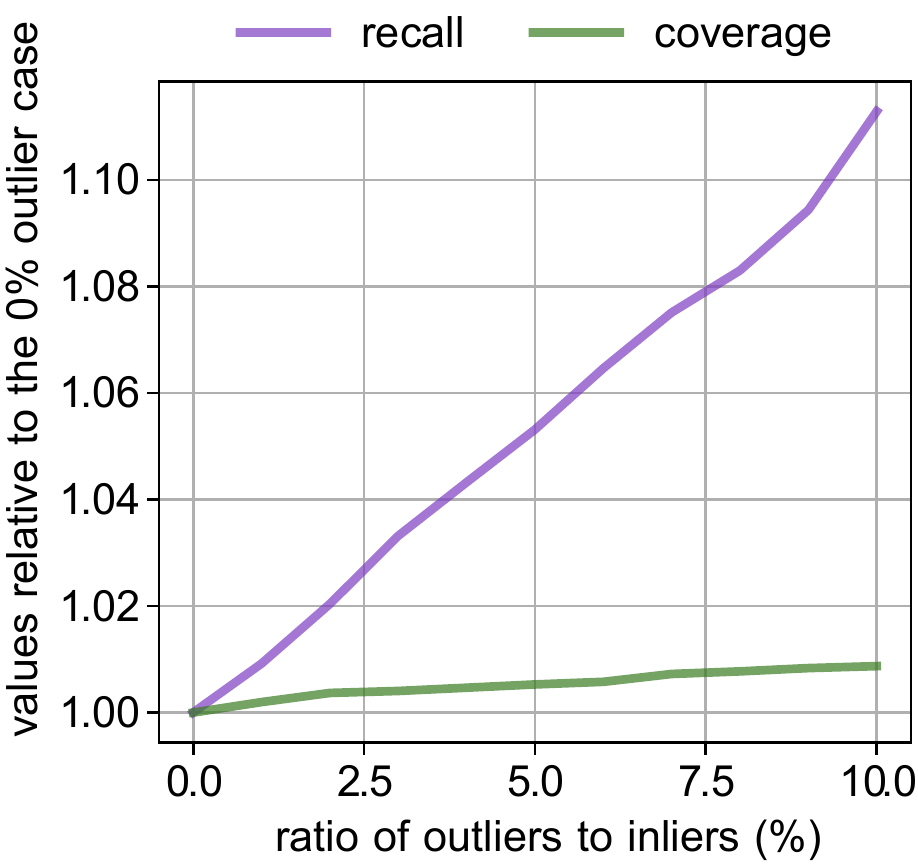} &
    \includegraphics[width=.46\linewidth]{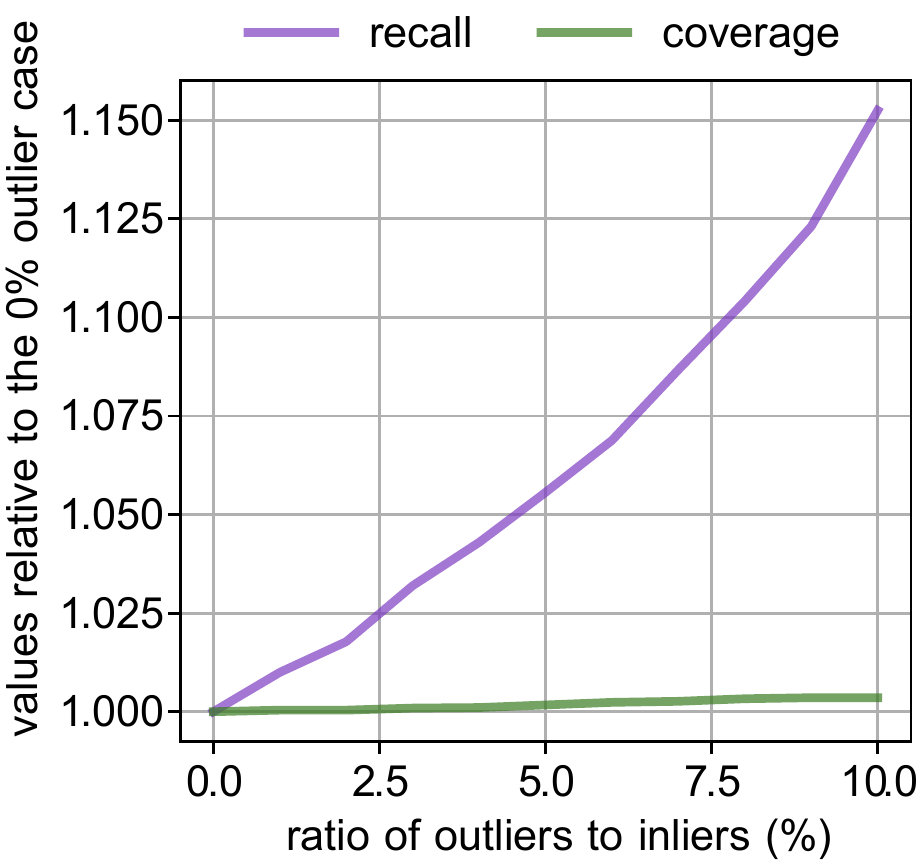} \\
    \vspace{-.5em}& \\
    \includegraphics[width=.45\linewidth]{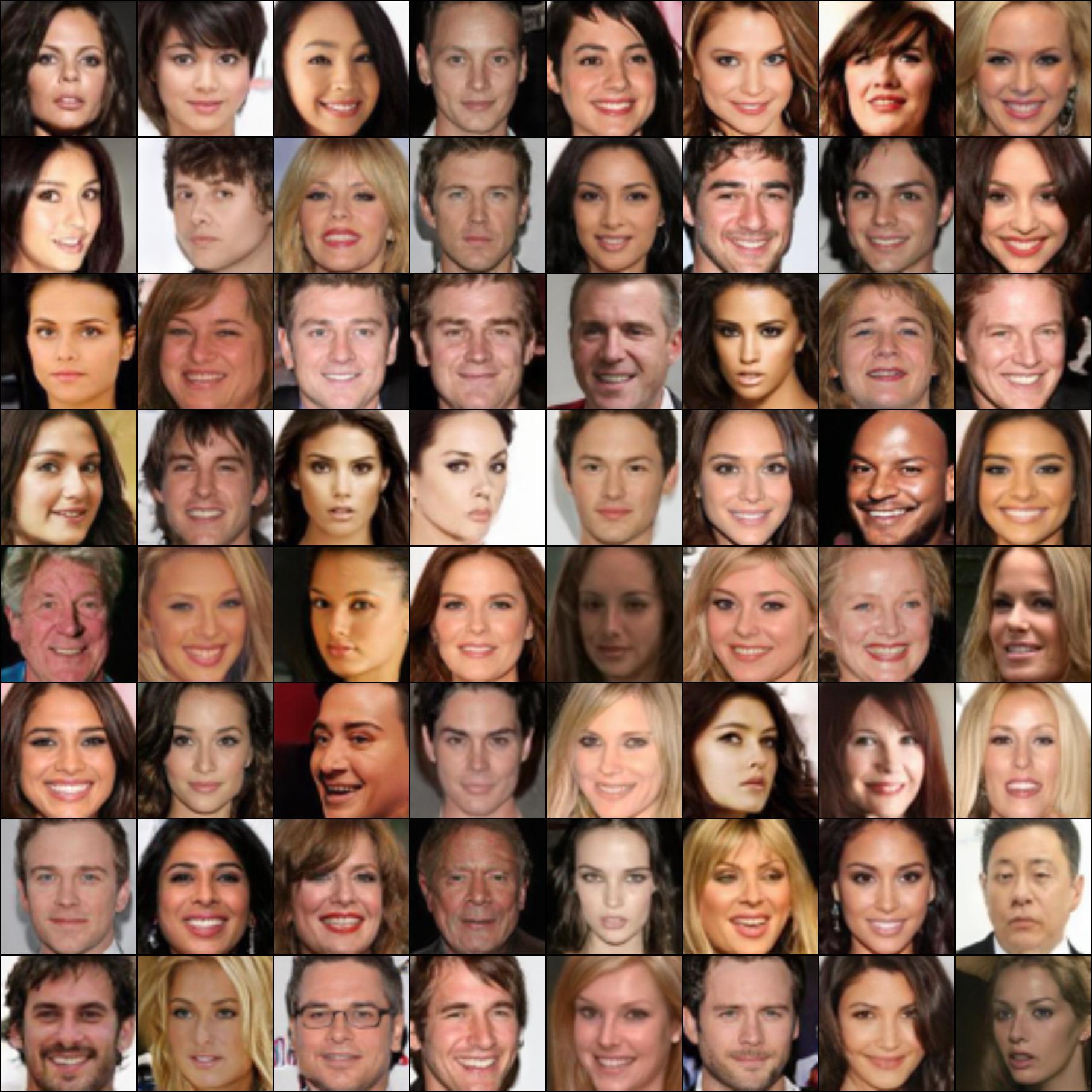} &
    \includegraphics[width=.45\linewidth]{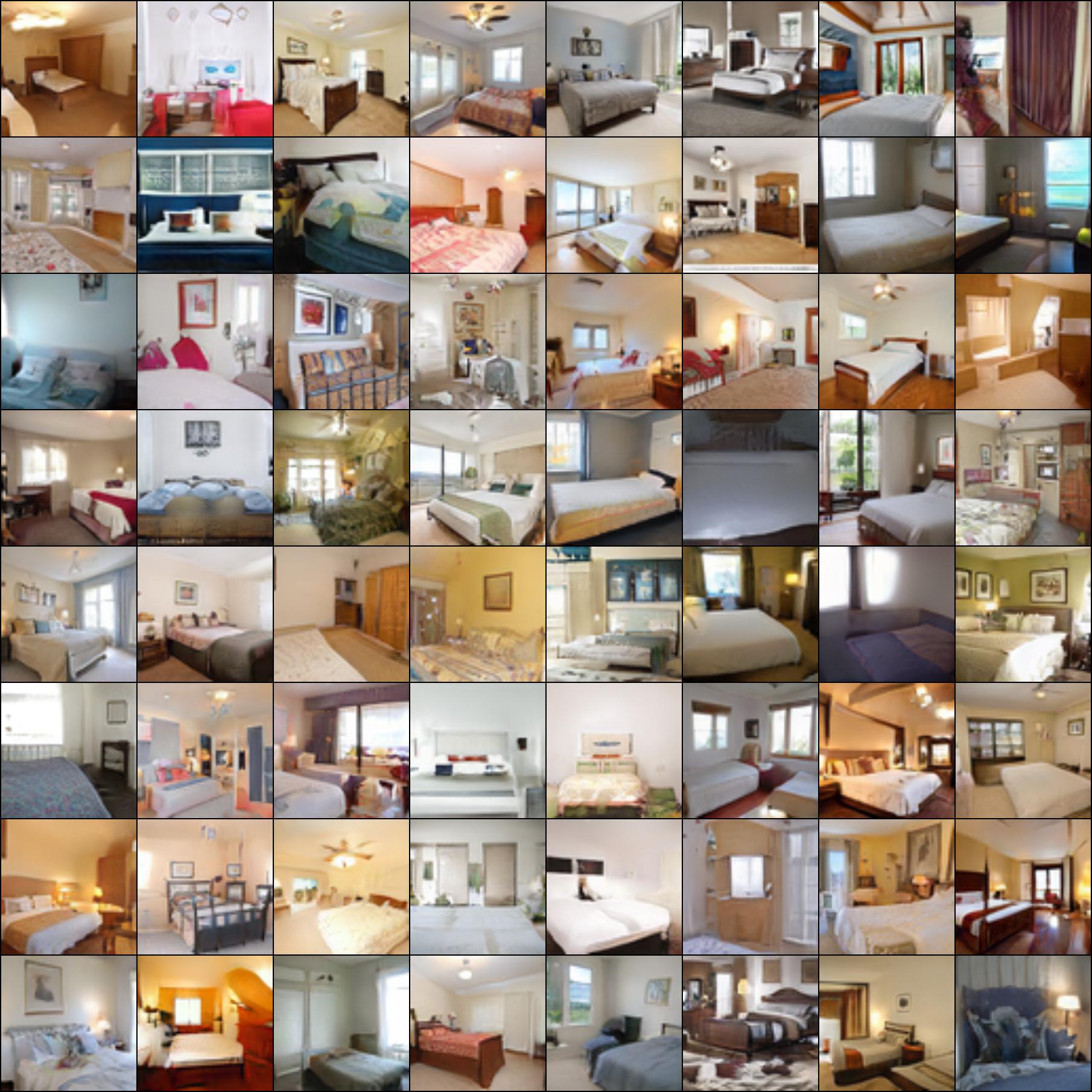} \\
    Inliers & Inliers \\
    \vspace{-.5em}&\\
    \includegraphics[width=.45\linewidth]{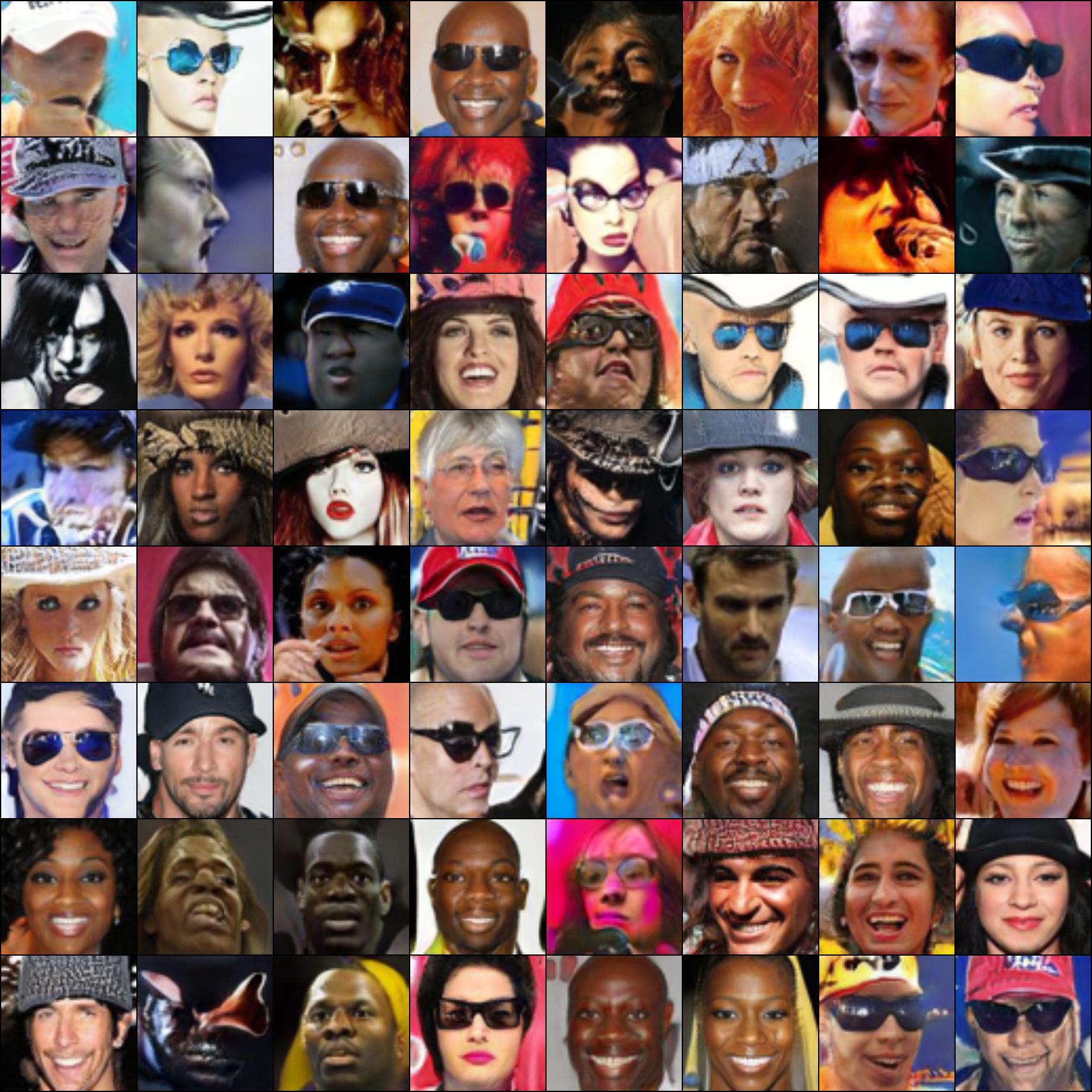} &
    \includegraphics[width=.45\linewidth]{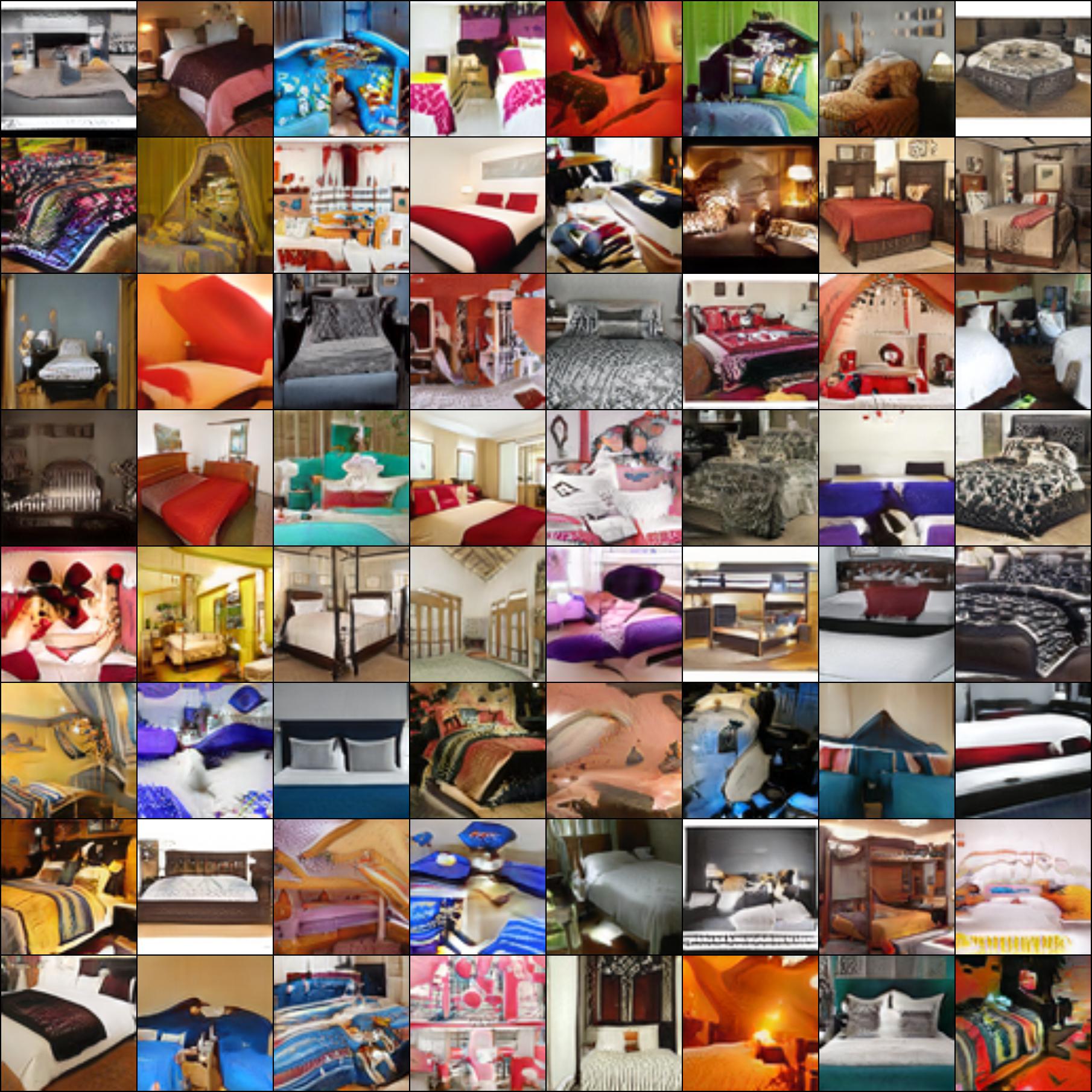} \\
    Worst outliers & Worst outliers
    \end{tabular}
    \caption{\small \textbf{Recall versus coverage against the amount of outliers.
    } 
    Example inliers and the worst outliers are shown below.}
    \label{fig:outlier_generated}
    \subfloat[Precision vs density]{\includegraphics[width=.49\linewidth]{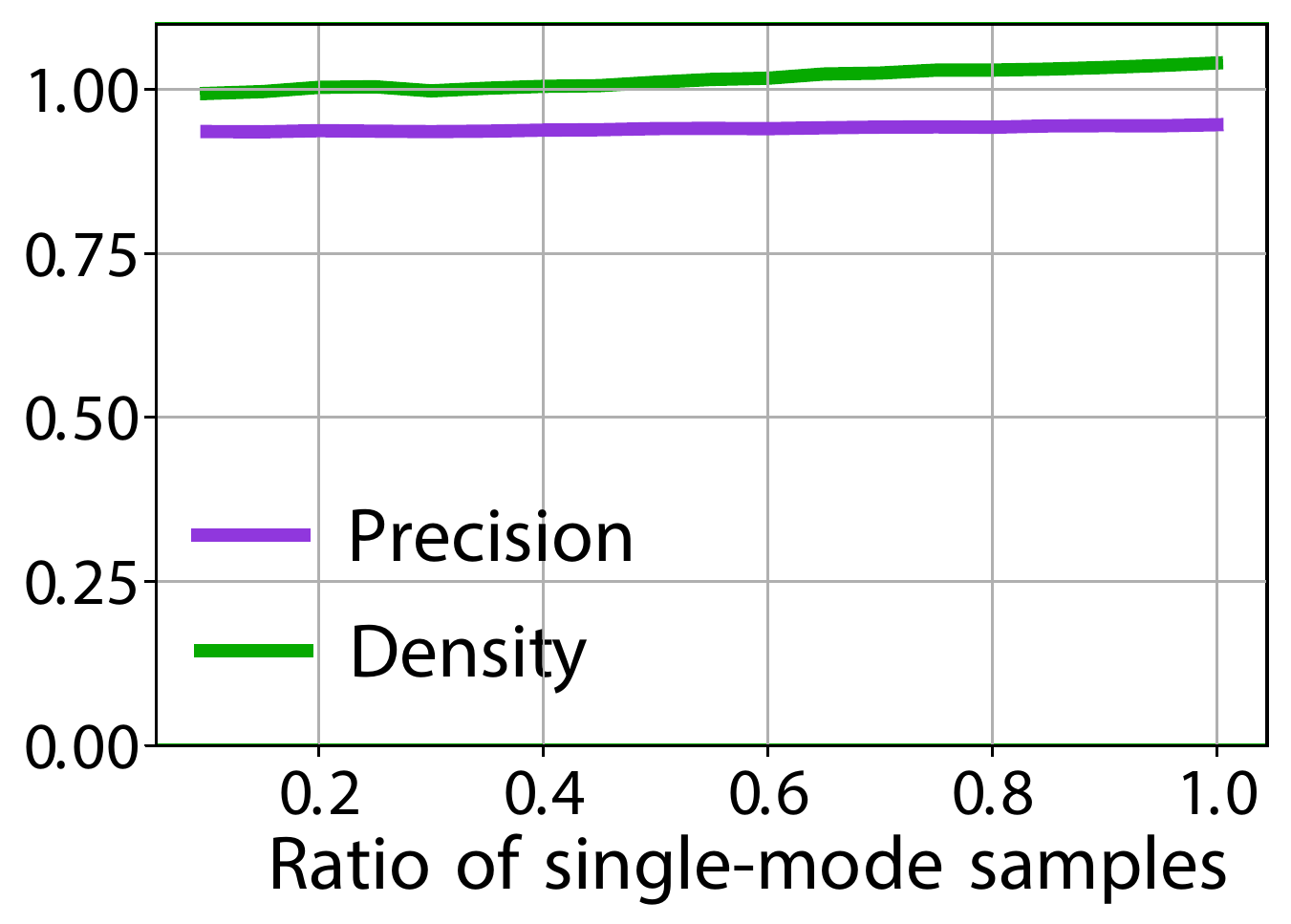}}
    \subfloat[Recall vs coverage]{\includegraphics[width=.49\linewidth]{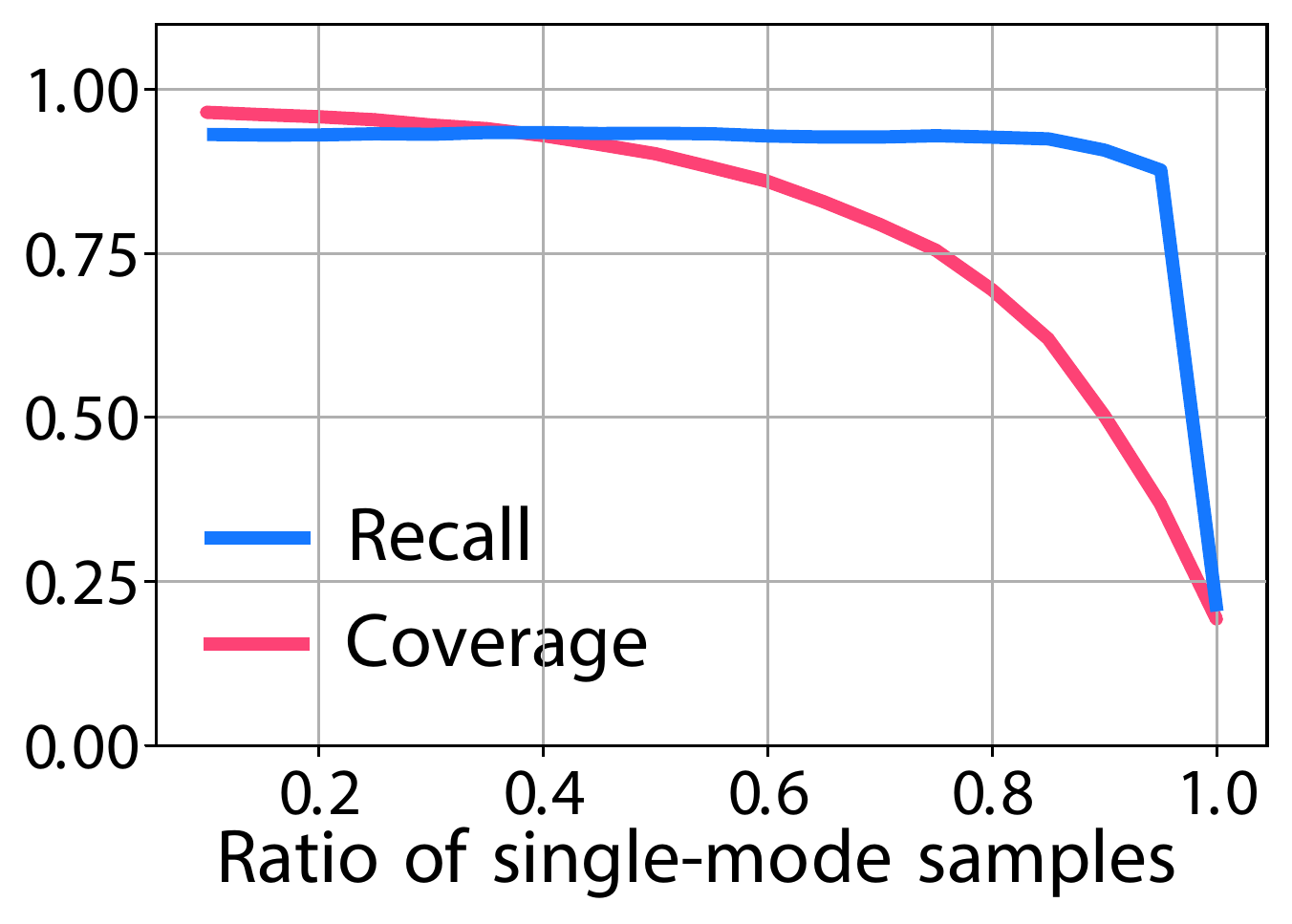}}
    \caption{\small \textbf{Metrics under mode dropping on MNIST.} 
    Behaviour of the metrics under varying degrees of the mode dropping towards a single class on MNIST with \randomEmb. The real distribution has the uniform class distribution while the fake distribution has an increasing ratio of the class ``0''.}
    \label{fig:mnist_mode_dropping}
\end{figure}

\textbf{Outliers.}
Before studying the impact of outliers on the evaluation metrics, we introduce our criterion for outlier detection. Motivated by \cite{ipr2019}, we use the distance to the  $k^\text{th}$ nearest neighbour among the fake samples. According to this criterion, we split the fake samples into $10:1$ for inliers and outliers. We experiment with fake images from StyleGAN on CelebA~\cite{liu2015celeba} and LSUN-bedroom~\cite{yu15lsun}. Example images of inliers and outliers are shown in Figure~\ref{fig:outlier_generated}. We observe that the outliers have a more distortions and atypical semantics. 

We examine the behaviour of recall and coverage as the outliers are gradually added to the pool of fake samples. In Figure~\ref{fig:outlier_generated}, we plot recall and coverage relative to their values when there are only inliers. As outliers are added, recall increases more than 11\% and 15\% on CelebA and LSUN bedroom, respectively, demonstrating its vulnerability to outliers. Coverage, on the other hand, is stable: less than 2\% increase with extra outliers.

\begin{figure}[!t]
    \centering
    \small
    \includegraphics[width=0.45\textwidth]{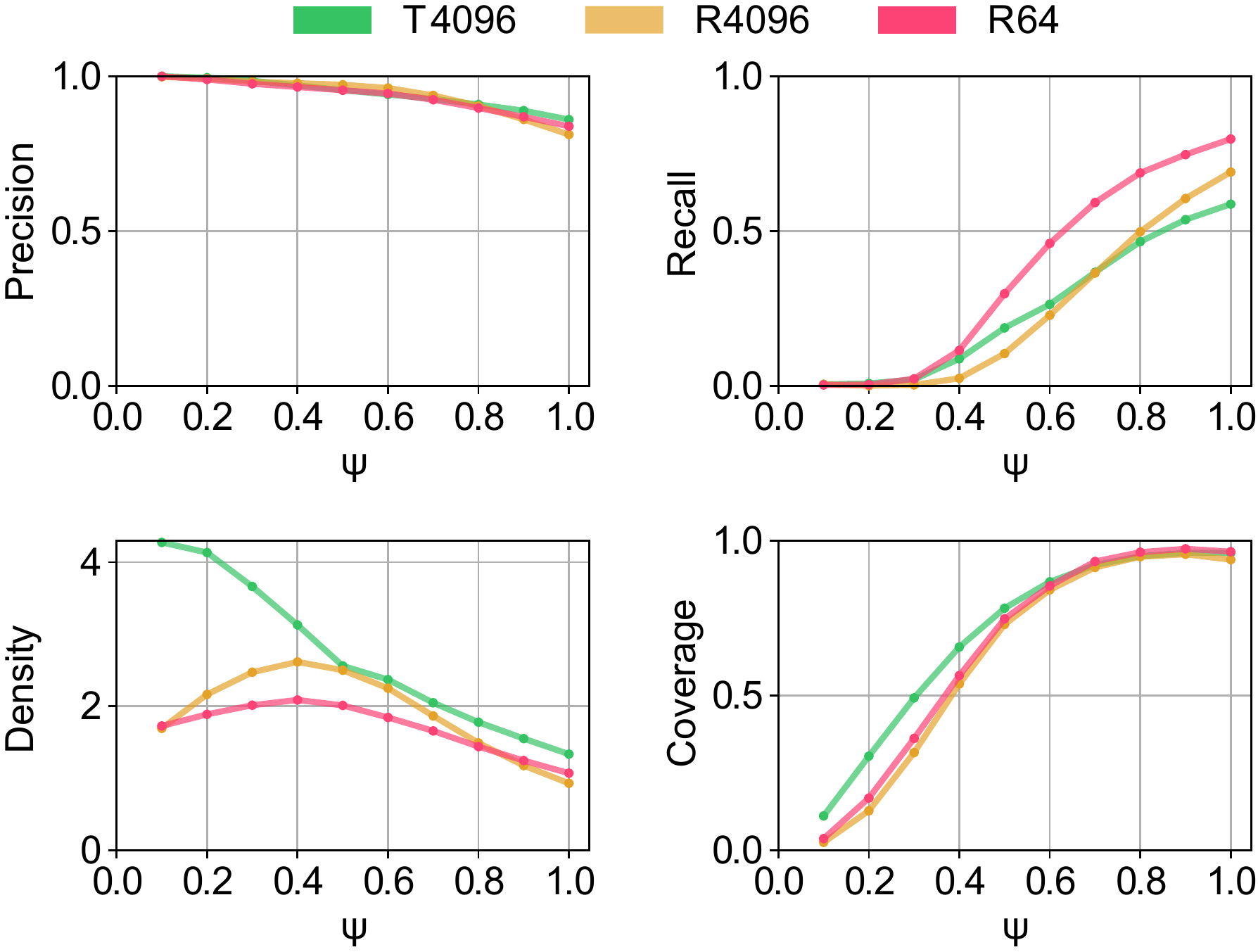}\\
    \vspace{1em}
    \setlength{\tabcolsep}{0.1em}
    \begin{tabular}{ccccc}
    \includegraphics[width=.19\linewidth]{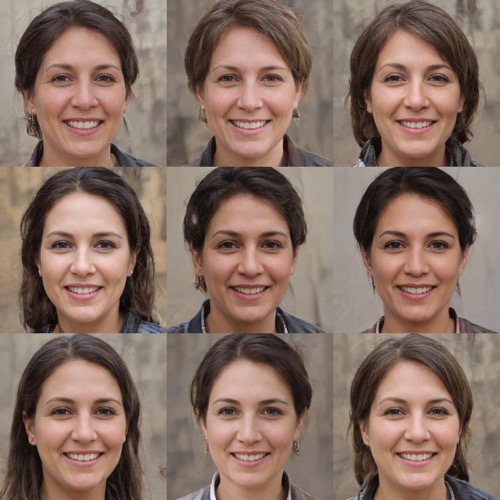}&
    \includegraphics[width=.19\linewidth]{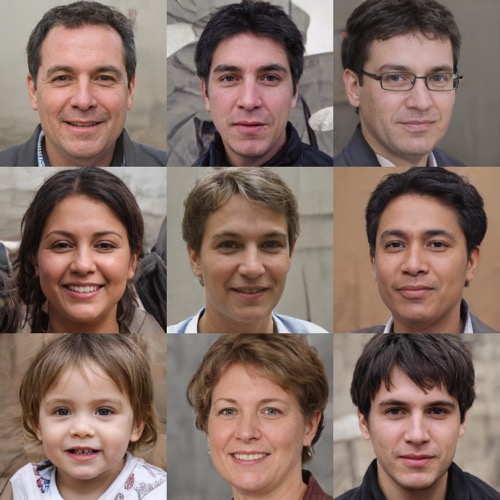}&
    \includegraphics[width=.19\linewidth]{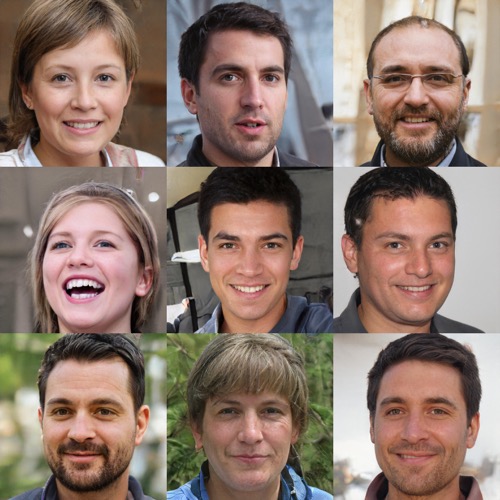}&
    \includegraphics[width=.19\linewidth]{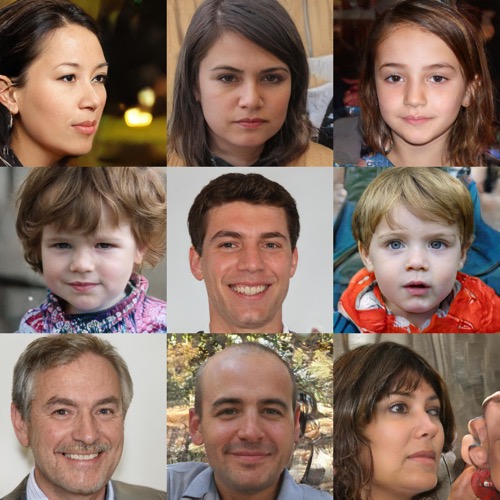}&
    \includegraphics[width=.19\linewidth]{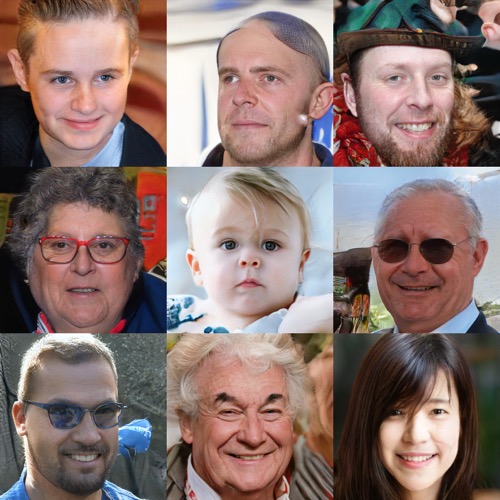}\\
    $\psi=0.2$&
    $\psi=0.4$&
    $\psi=0.6$&
    $\psi=0.8$&
    $\psi=1.0$
    \end{tabular}
    \caption{\small \textbf{Behaviour of metrics with $\psi$.} The latent truncation threshold $\psi$ is applied over over StyleGAN generated images on FFHQ. Qualitative examples are given.}
    \label{fig:embedding_dc_pr_comparison}
\end{figure}

\textbf{Mode dropping.}
As in the toy experiments, we study mode dropping on the MNIST digit images. We treat the ten classes as modes and simulate the scenario where a generative model gradually favours a particular mode (class ``0'') over the others. We use the real data (MNIST images) with decreasing number of classes covered as our fake samples (sequential dropping). The results are shown in Figure~\ref{fig:mnist_mode_dropping}. While recall is unable to detect the decrease in overall diversity until the ratio of class ``0'' is over 90\%, coverage exhibits a gradual drop. Coverage is superior to recall at detecting mode dropping.

\textbf{Resolving fidelity and diversity.}
The main motivation behind two-value metrics like P\&R is the diagnosis involving the fidelity and diversity of generated images. We validate whether D\&C successfully achieve this. We perform the truncation analysis on StyleGAN \cite{karras2019style} on FFHQ. The truncation technique is used in generative models to artificially manipulate the learned distributions by thresholding the latent noise $Z$ with $\psi\geq 0$~\cite{brock2018biggan}. In general, for greater $\psi$, data fidelity decreases and diversity increases, and vice versa. This stipulates the desired behaviours of the P\&R and D\&C metrics.

The results are shown in Figure~\ref{fig:embedding_dc_pr_comparison}. With increasing $\psi$, precision and density decrease, while recall and coverage increase. Note that density varies more than precision as $\psi$ increases, leading to finer-grained diagnosis for the fidelity.

\textbf{Relation to negative log likelihood metrics.}
For variational auto-encoders (VAE,~\cite{kingma2013auto}) and their variants, negative log likelihood (NLL) has been a popular metric for quantifying the ability of the trained model to represent and replicate the data distribution. While NLL provides a single view of the generative model, the D\&C metrics provide further diagnostic information. For example, we have trained a convolutional VAE with MNIST. The mean NLL is computed on the reconstructed test data; the D\&C are computed over randomly-initialised VGG features of dimension 64 (\S\ref{subsec:random_embeddings}). To simulate the fidelity-diversity trade-off for VAEs, we consider truncating the latent variable $z$ at magnitude 1. The NLL for vanilla $-\log p(x_{\text{test}}| z)$ versus truncated latent $-\log p(x_{\text{test}}| z, \|z\| \leq 1)$ is 41.4 and 40.2, not providing meaningful granularity on fidelity-diversity trade-off. For the same setup, D\&C provide further details: density has not changed much (0.208$\rightarrow$0.205), while coverage drops significantly (0.411$\rightarrow$0.291) as the result of truncation. Latent variable truncation for our VAE model leads to a drop in diversity, while not improving the fidelity.

\subsection{Random Embeddings}
\label{subsec:random_embeddings}
Image embedding is an important component in generative model evaluation (\S\ref{subsec:evaluation_pipeline}). Yet, this ingredient is relatively less studied in existing literature. In this section, we explore the limitations of the widely-used ImageNet embeddings when the target data is distinct from the ImageNet samples. In such cases, we propose random embeddings as alternatives. In the following experiments, we write \trainedEmb for the \texttt{fc2} features of VGG16 pre-trained on ImageNet, \randomEmbfull for the \texttt{fc2} features of a randomly initialised VGG16, and \randomEmb for the \texttt{fc2}$^\prime$ features of a randomly initialised VGG16 where the prime indicates the replacement of the 4096 dimensions with 64 dimensions. We experiment on MNIST and a sound dataset, Speech Commands Zero Through Nine (SC09).

\begin{table}[!t]
\setlength{\tabcolsep}{0.1em}
\begin{tabular}{cccc}
    Real MNIST & Real spectrogram \\
    \includegraphics[width=0.45\linewidth]{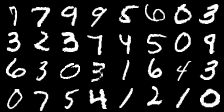} &
    \includegraphics[width=0.45\linewidth]{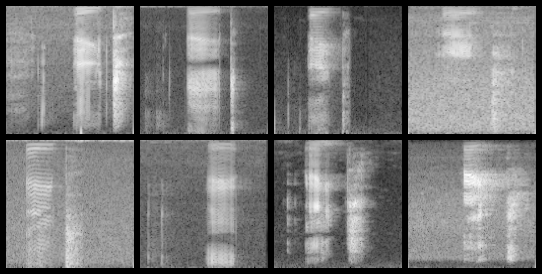} \vspace{0 cm}\\
    DCGAN & WaveGAN \\ 
    \includegraphics[width=0.45\linewidth]{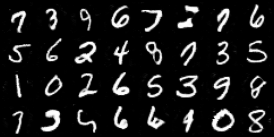} &
    \includegraphics[width=0.45\linewidth]{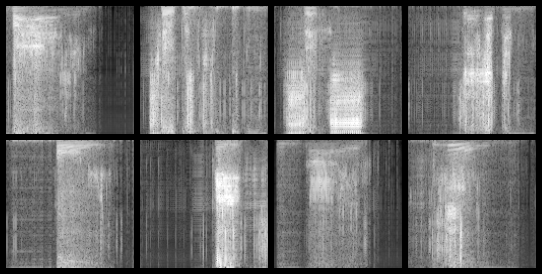}
    \vspace{.5em}
\end{tabular}
\setlength{\tabcolsep}{0.3em}
\centering
\small
\begin{tabular}{rc*{8}{c}}
            && \multicolumn{3}{c}{MNIST} && \multicolumn{3}{c}{Sound}\\
Methods     && \trainedEmb & \randomEmbfull & \randomEmb && \trainedEmb & \randomEmbfull & \randomEmb\\
\cline{1-1} \cline{3-5} \cline{7-9}\vspace{-1em} & \\
Precision   && 0.07 & 0.60 & 0.72 && 0.64 & 0.71 & 0.78\\
Recall      && 0.08 & 0.38 & 0.58 && 0.03 & 0.46 & 0.54\\
Density     && 0.02 & 0.31 & 0.53 && 0.27 & 0.46 & 0.62\\
Coverage    && 0.02 & 0.56 & 0.70 && 0.02 & 0.47 & 0.59\\
\cline{1-1} \cline{3-5} \cline{7-9}
\end{tabular}
\caption{\small \textbf{Failure of trained embeddings.} Evaluation results of generated samples beyond natural images using \trainedEmb, \randomEmbfull, and \randomEmb. Corresponding real and fake qualitative samples shown.}
\label{tab:embedding_trained_failure}
\end{table}

\subsubsection{Metrics with \randomEmb on natural images}
We study the behaviour of metrics for different truncation thresholds ($\psi$) on StyleGAN generated FFHQ images (Figure~\ref{fig:embedding_dc_pr_comparison}). We confirm that the metrics on \randomEmb closely follow the trend with \trainedEmb. P\&R and D\&C over \randomEmb can be used in ImageNet-like images to capture the fidelity and diversity aspects of model outputs. 

\subsubsection{Metrics with \randomEmb beyond natural images}
We consider the scenario where the target distribution is significantly different from the ImageNet statistics. We use DCGAN generated images~\cite{radford2015unsupervised} on MNIST~\cite{lecun1998gradient} and WaveGAN generated spectrograms~\cite{donahue2019wavegan} on Speech Commands Zero Through Nine (SC09) dataset. The qualitative examples and corresponding metrics are reported in Table~\ref{tab:embedding_trained_failure}. 

Compared to the high quality of generated MNIST samples, the metrics on \trainedEmb are generally low (\eg 0.047 density), while the metrics on \randomEmb report reasonably high scores (\eg 0.491 density). For sound data, likewise, the metrics on \trainedEmb do not faithfully represent the general fidelity and diversity of sound samples. The real and fake sound samples are provided at \url{http://bit.ly/38DIMAA} and \url{http://bit.ly/2HAm8NB}. The samples consist of human voice samples of digit words ``zero'' to ``nine''. The fake samples indeed lack enough fidelity yet, but they do cover diverse number classes. Thus, the recall (0.029) and coverage (0.020) values under \trainedEmb are severe underestimations of the actual diversity. Under the \randomEmb, the recall and coverage values are in the more sensible range: 0.572 and 0.653, respectively. When the target data domain significantly differ from the embedding training domain, \randomEmb may be a more reasonable choice. 

\section{Conclusion and discussion}

We have systematically studied the existing metrics for evaluating generative models with a particular focus on the fidelity and diversity aspects. While the recent work on the improved precision and recall~\cite{ipr2019} provide good estimates of such aspects, we discover certain failure cases where they are not practical yet: overestimating the manifolds, underestimating the scores when the real and fake distributions are identical, not being robust to outliers, and not detecting certain mode dropping. To remedy the issues, we have proposed the novel density and coverage metrics. Density and coverage have an additional conceptual advantage that they allow a systematic selection of involved hyperparameters. We suggest future researchers to use density and coverage for more stable and reliable diagnosis of their models. On top of this, we analyse the less-studied component of embedding. Prior metrics have mostly relied on ImageNet pre-trained embeddings. We argue through empirical studies that random embeddings are better choices when the target distribution is significantly different from the ImageNet statistics.

\section*{Acknowledgements}

We thank the Clova AI Research team for the support, in particular Sanghyuk Chun and Jungwoo Ha for great discussions and paper reviews. Naver Smart Machine Learning (NSML) platform~\cite{NSML} has been used in the experiments. Kay Choi has advised the design of figures.

{
\small
\bibliography{main}
\bibliographystyle{icml2020}
}

\end{document}